\setlist{leftmargin=10mm}
\title{Efficient Meta Reinforcement Learning for Preference-based Fast Adaptation}
\author{%
  Zhizhou Ren$^{12}$, Anji Liu$^3$, Yitao Liang$^{45}$, Jian Peng$^{126}$, Jianzhu Ma$^6$ \\
  $^1$Helixon Ltd. $^2$University of Illinois at Urbana-Champaign \\
  $^3$University of California, Los Angeles \\
  $^4$Institute for Artificial Intelligence, Peking University\\
  $^5$Beijing Institute for General Artificial Intelligence\\
  $^6$Institute for AI Industry Research, Tsinghua University\\
  \texttt{zhizhour@helixon.com}, \texttt{liuanji@cs.ucla.edu} \\
  \texttt{yitaol@pku.edu.cn}, \texttt{jianpeng@illinois.edu} \\
  \texttt{majianzhu@tsinghua.edu.cn} \\
}
\begin{document}

\newcommand{\ie}{{\textit{i.e.}}}
\newcommand{\eg}{{\textit{e.g.}}}
\newcommand{\RenyiUlam}{R{\'e}nyi-Ulam}
\newcommand{\mismatch}{\mathcal{E}}
\newcommand{\BVol}{{\mathcal{BV}ol}_{\widehat Z}}

\newtheorem{theorem}{Theorem}
\newtheorem{definition}{Definition}
\newtheorem{corollary}{Corollary}
\newtheorem{lemma}{Lemma}
\newtheorem{fact}{Fact}
\newtheorem{implication}{Implication}
\newtheorem{proposition}{Proposition}
\newtheorem{assumption}{Assumption}

\maketitle

\begin{abstract}
	Learning new task-specific skills from a few trials is a fundamental challenge for artificial intelligence. Meta reinforcement learning (meta-RL) tackles this problem by learning transferable policies that support few-shot adaptation to unseen tasks. Despite recent advances in meta-RL, most existing methods require the access to the environmental reward function of new tasks to infer the task objective, which is not realistic in many practical applications. To bridge this gap, we study the problem of few-shot adaptation in the context of human-in-the-loop reinforcement learning. We develop a meta-RL algorithm that enables fast policy adaptation with preference-based feedback. The agent can adapt to new tasks by querying human's preference between behavior trajectories instead of using per-step numeric rewards. By extending techniques from information theory, our approach can design query sequences to maximize the information gain from human interactions while tolerating the inherent error of non-expert human oracle. In experiments, we extensively evaluate our method, \textit{Adaptation with Noisy OracLE} (ANOLE), on a variety of meta-RL benchmark tasks and demonstrate substantial improvement over baseline algorithms in terms of both feedback efficiency and error tolerance.
\end{abstract}

\section{Introduction}

Reinforcement learning (RL) has achieved great success in learning complex behaviors and strategies in a variety of sequential decision-making problems, including Atari games \citep{mnih2015human}, board game Go \citep{silver2016mastering}, MOBA games \citep{berner2019dota}, and real-time strategy games \citep{vinyals2019grandmaster}. However, most these breakthrough accomplishments are limited in simulation environments due to the sample inefficiency of RL algorithms. Training a policy using deep reinforcement learning usually requires millions of interaction samples with the environment, which is not practicable in real-world applications. One promising methodology towards breaking this practical barrier is \textit{meta reinforcement learning} \citep[meta-RL][]{finn2017model}. The goal of meta-RL is to enable fast policy adaptation to unseen tasks with a small amount of samples. Such an ability of few-shot adaptation is supported by meta-training on a suite of tasks drawn from a prior task distribution. Meta-RL algorithms can extract transferable knowledge from the meta-training experiences by exploiting the common structures among the prior training tasks. A series of recent works have been developed to improve the efficiency of meta-RL in several aspects, \eg, using off-policy techniques to improve the sample efficiency of meta-training \citep{rakelly2019efficient, fakoor2020meta}, and refining the exploration-exploitation trade-off during adaptation \citep{zintgraf2020varibad, liu2021decoupling}. There are also theoretical works studying the notion of few-shot adaptation and knowledge reuse in RL problems \citep{brunskill2014pac, wang2020global, chua2021provable}.

While recent advances remarkably improve the sample efficiency of meta-RL algorithms, little work has been done regarding the type of supervision signals adopted for policy adaptation. The adaptation procedure of most meta-RL algorithms is implemented by either fine-tuning policies using new trajectories \citep{finn2017model, fakoor2020meta} or inferring task objective from new reward functions \citep{duan2016rl, rakelly2019efficient, zintgraf2020varibad}, both of which require the access to the environmental reward function in new tasks to perform adaptation. Such a reward-driven adaptation protocol may become impracticable in many application scenarios. For instance, when a meta-trained robot is designed to provide customizable services for non-expert human users \citep{prewett2010managing}, it is not realistic for the meta-RL algorithm to obtain per-step reward signals directly from the user interface. The design of reward function is a long-lasting challenge for reinforcement learning \citep{sorg2010reward}, and there is no general solution to specify rewards for a particular goal \citep{amodei2016concrete, abel2021expressivity}. Existing techniques can support reward-free meta-training by constructing a diverse policy family through unsupervised reward generation \citep{gupta2020unsupervised}, and reward-free few-shot policy adaptation remains a challenge \citep{liu2020explore}.

In this paper, we study the interpolation of meta-RL and human-in-the-loop RL, towards expanding the applicability of meta-RL in practical scenarios without environmental rewards. We pursue the few-shot adaptation from human preferences \citep{furnkranz2012preference}, where the agent infers the objective of new tasks by querying a human oracle to compare the preference between pairs of behavior trajectories. Such a preference-based supervision is more intuitive than numeric rewards for human users to instruct the meta policy. \eg, the human user can watch the videos of two behavior trajectories and label a preference order to express the task objective \citep{christiano2017deep}. The primary goal of such a preference-based adaptation setting is to suit the user's preference with only a few preference queries to the human oracle. Minimizing the burden of interactions is a central objective of human-in-the-loop learning. In addition, we require the adaptation algorithm to be robust when the oracle feedback carries some noises. The human preference is known to have some extent of inconsistency \citep{loewenstein1992anomalies}, and the human user may also make unintentional mistakes when responding to preference queries. The tolerance to feedback error is an important evaluation metric for preference-based reinforcement learning \citep{lee2021b}.

To develop an efficient and robust adaptation algorithm, we draw a connection with a classical problem called \textit{\RenyiUlam's game} \citep{renyi1961problem, ulam1976adventures} from information theory. We model the preference-based task inference as a noisy-channel communication problem, where the agent communicates with the human oracle and infers the task objective from the noisy binary feedback. Through this problem transformation, we extend an information quantification called \textit{Berlekamp's volume} \citep{berlekamp1968block} to measure the uncertainty of noisy preference feedback. This powerful toolkit enables us to design the query contents to maximize the information gain from the noisy preference oracle. We implement our few-shot adaptation algorithm, called \textit{Adaptation with Noisy OracLE} (ANOLE), upon an advanced framework of probabilistic meta-RL \citep{rakelly2019efficient} and conduct extensive evaluation on a suite of Meta-RL benchmark tasks. The experiment results show that our method can effectively infer the task objective from limited interactions with the preference oracle and, meanwhile, demonstrate robust error tolerance against the noisy feedback.

\section{Preliminaries}

\subsection{Problem Formulation of Meta Reinforcement Learning} \label{sec:problem-formulation}

\paragraph{Task Distribution.}
The same as previous meta-RL settings \citep{rakelly2019efficient}, we consider a task distribution $p(\mathcal{T})$ where each task instance $\mathcal{T}$ induces a \textit{Markov Decision Process} (MDP). We use a tuple ${\mathcal{M}}_{\mathcal{T}}=\langle \mathcal{S}, \mathcal{A}, T, R_{\mathcal{T}}\rangle$ to denote the MDP specified by task $\mathcal{T}$. In this paper, we assume task $\mathcal{T}$ does not vary the environment configuration, including state space $\mathcal{S}$, action space $\mathcal{A}$, and transition function $T(s'\mid s,a)$. Only reward function $R_{\mathcal{T}}(\cdot\mid s,a)$ conditions on task $\mathcal{T}$, which defines the agent's objectives for task-specific goals. The meta-RL algorithm is allowed to sample a suite of training tasks $\{{\mathcal{T}}_i\}_{i=1}^N$ from the task distribution $p(\mathcal{T})$ to support meta-training. In the meta-testing phase, a new set of tasks are drawn from $p(\mathcal{T})$ to evaluate the adaptation performance.

\paragraph{Preference-based Adaptation.}
We study the few-shot adaptation with preference-based feedback, in which the agent interacts with a black-box preference oracle $\Omega_{\mathcal{T}}$ rather than directly receiving task-specific rewards from the environment $\mathcal{M}_{\mathcal{T}}$ for meta-testing adaptation. More specifically, the agent can query the preference order of a pair of trajectories $\langle\tau^{(1)},\tau^{(2)}\rangle$ through the black-box oracle $\Omega_{\mathcal{T}}$. Each trajectory is a sequence of observed states and agent actions, denoted by $\tau=\langle s_0, a_0, s_1, a_1, \cdots, s_L\rangle$ where $L$ is the trajectory length. For each query trajectory pair $\langle\tau^{(1)},\tau^{(2)}\rangle$, the preference oracle $\Omega_{\mathcal{T}}$ would return either $\tau^{(1)}\succ\tau^{(2)}$ or $\tau^{(1)}\prec\tau^{(2)}$ according to the task specification $\mathcal{T}$, where $\tau^{(1)}\succ\tau^{(2)}$ means the oracle prefers trajectory $\tau^{(1)}$ to trajectory $\tau^{(2)}$ under the context of task $\mathcal{T}$. When the preference orders of $\tau^{(1)}$ and $\tau^{(2)}$ are equal, both returns are valid. The preference-based adaptation is a weak-supervision setting in comparison with previous meta-RL works using per-step reward signals for few-shot adaptation \citep{finn2017model, rakelly2019efficient}. The purpose of this adaptation setting is to simulate the practical scenarios with human-in-the-loop supervision \citep{wirth2017survey, christiano2017deep}. We consider two aspects to evaluate the ability of an adaptation algorithm:
\begin{itemize}
	\item \textbf{Feedback Efficiency.} The central goal of meta-RL is conducting fast adaptation to unseen tasks with a few task-specific feedback. We consider the adaptation efficiency in terms of the number of preference queries to oracle $\Omega_{\mathcal{T}}$. This objective expresses the practical demand that we aim to reduce the burden of preference oracle since human feedback is expensive.
	\item \textbf{Error Tolerance.} Another performance metric is on the robustness against the noisy oracle feedback. The preference oracle $\Omega_{\mathcal{T}}$ may carry errors in practice, \eg, the human oracle may misunderstand the query message and give wrong feedback. Tolerating such oracle errors is a practical challenge for preference-based reinforcement learning.
\end{itemize}

\subsection{Meta Reinforcement Learning with Probabilistic Task Embedding}

\paragraph{Latent Task Embedding.}
We follow the algorithmic framework of \textit{Probabilistic Embeddings for Actor-critic RL} \citep[PEARL;][]{rakelly2019efficient}. The task specification $\mathcal{T}$ is modeled by a latent task variable (or latent task embedding) $z\in\mathcal{Z}=\mathbb{R}^d$ where $d$ denotes the dimension of the latent space. With this formulation, the overall paradigm of the meta-training procedure resembles a multi-task RL algorithm. Both policy $\pi(a|s;z)$ and value function $Q(s,a;z)$ condition on the latent task variable $z$ so that the representation of $z$ can be end-to-end learned with the RL objective to distinguish different task specifications. During meta-testing, the adaptation is performed in the low-dimensional task embedding space rather than the high-dimensional parameter space.

\paragraph{Adaptation via Probabilistic Inference.}
To infer the task embedding $z$ from the latent task space, PEARL trains an inference network (or context encoder) $q(z|\mathbf{c})$ where $\mathbf{c}$ is the context information including agent actions, observations, and rewards. The output of $q(z|\mathbf{c})$ is probabilistic, \ie, the agent has a probabilistic belief over the latent task space $\mathcal{Z}$ based on its observations and received rewards. We use $q(z)$ to denote the prior distribution for $\mathbf{c}=\emptyset$. The adaptation protocol of PEARL follows the framework of \textit{posterior sampling} \citep{strens2000bayesian}. The agent continually updates its belief by interacting with the environment and refine its policy according to the belief state. This algorithmic framework is generalized from Bayesian inference \citep{thompson1933likelihood} and has solid background in reinforcement learning theory \citep{agrawal2012analysis, osband2013more, russo2014learning}. However, some recent works show that the empirical performance of neural inference networks highly rely on the access to a dense reward function \citep{zhang2021metacure, hua2021hmrl}. When the task-specific reward signals are sparsely distributed along the agent trajectory, the task inference given by context encoder $q(z|\mathbf{c})$ would suffer from low sample efficiency and cannot accurately decode task specification. This issue may worsen in our adaptation setting since only trajectory-wise preference comparisons are available to the agent. It motivates us to explore new methodology for few-shot adaptation beyond classical approaches based on posterior sampling.

\section{Preference-based Fast Adaptation with A Noisy Oracle}

In this section, we will introduce our method, \textit{Adaptation with Noisy OracLE} (ANOLE), a novel task inference algorithm for preference-based fast adaptation. The goal of our approach is to achieve both high feedback efficiency and error tolerance.

\subsection{Connecting Preference-based Task Inference with \RenyiUlam's Game}

To give an information-theoretic view on the task inference problem with preference feedback, we connect our problem setting with a classical problem called \textit{\RenyiUlam's Game} \citep{renyi1961problem, ulam1976adventures} from information theory to study the interactive learning procedure with a noisy oracle.

\begin{definition}[\RenyiUlam's Game] \label{def:ulam_game}
	There are two players, called A and B, participating the game. Player A thinks of something in a predefined element universe, and player B would like to guess it. To extract information, player B can ask some questions to player A, and the answers to these questions are restricted to "yes/no". A given percentage of \textbf{player A's answers can be wrong}, which \textbf{requires player B's question strategy to be error-tolerant}.
\end{definition}

In the literature of information theory, \RenyiUlam's game specified in Definition~\ref{def:ulam_game} is developed to study the error-tolerant communication protocol for noisy channels \citep{shannon1956zero, renyi1984diary}. Most previous works on \RenyiUlam's game focus on the error tolerance of player B's question strategy, \ie, how to design the question sequence to maximize the information gain from the noisy feedback \citep{pelc2002searching}. In this paper, we consider the online setting of \RenyiUlam's game, where player B is allowed to continually propose queries based on previous feedback.

We draw a connection between \RenyiUlam's game and preference-based task inference. In the context of preference-based meta adaptation, the task inference algorithm corresponds to the questioner player B, and the preference oracle $\Omega$ plays the role of responder player A. The preference feedback given by oracle $\Omega$ is a binary signal regarding the comparison between two trajectories, which has the same form as the "yes/no" feedback in \RenyiUlam's game. The goal of the task inference algorithm is to search for the true task specification in the task space using minimum number of preference queries while tolerating the errors in oracle feedback. The similarity in problem structures motivates us to extend techniques from \RenyiUlam's game to preference-based task inference.

\subsection{An Algorithmic Framework for Preference-based Fast Adaptation}

In this section, we discuss how we transform the preference-based task inference problem to \RenyiUlam's game and introduce the basic algorithmic framework of our approach to perform few-shot adaptation with a preference oracle.

\paragraph{Transformation to \RenyiUlam's Game.}
The key step of the problem transformation is establishing the correspondence between the preference query in preference-based task inference and the natural-language-based questions in \RenyiUlam's game. In classical solutions of \RenyiUlam's game, a general format of player B's questions is to ask whether the element in player A's mind belongs to a certain subset of the element universe \citep{pelc2002searching}, whereas the counterpart of question format in preference-based task inference is restricted to querying whether the oracle prefers one trajectory to another. To bridge this gap, we use a model-based approach to connect the oracle preference feedback with the latent space of task embeddings. We train a preference predictor $f_\psi(\cdot;z)$ that predicts the oracle preference according to the task embedding $z$. This preference predictor can transform each oracle preference feedback to a separation on the latent task space, \ie, the preference prediction $f_\psi(\cdot;z)$ given by task embeddings in a subspace of $z$ can match the oracle feedback, and the task embeddings in the complementary subspace lead to wrong predictions. Through this transformation, the task inference algorithm can convert the binary preference feedback to the assessment of a subspace of latent task embeddings, which works in the same mechanism as previous solutions to \RenyiUlam's game.

To realize the problem transformation,  we consider a simple and direct implementation of the preference predictor. We train $f_\psi(\cdot;z)$ on the meta-training tasks by optimizing the preference loss:
\begin{align}
	\mathcal{L}^{\text{Pref}}(\psi) = \mathop{\mathbb{E}}\left[ D_{\text{KL}}\left(\left.\mathbb{I}\bigl[\tau^{(1)}\succ\tau^{(2)}\mid\mathcal{T}\bigr] ~\right\|~ f_\psi\bigl(\tau^{(1)}\succ\tau^{(2)};z\bigr)\right) \right],
\end{align}
where $\psi$ denotes the parameters of the preference predictor, $D_{\text{KL}}(\cdot\|\cdot)$ denotes the Kullback–Leibler divergence, and $\mathbb{I}[\tau^{(1)}\succ\tau^{(2)}\mid\mathcal{T}]$ is the ground-truth preference order specified by the task specification $\mathcal{T}$ (\eg, specified by the reward function on training tasks). The trajectory pair $\langle\tau^{(1)},\tau^{(2)}\rangle$ is drawn from the experience buffer, and $z$ is the task embedding vector encoding $\mathcal{T}$. More implementation details are included in Appendix~\ref{apx:our_implementation}.

\paragraph{Basic Algorithmic Framework.}
To facilitate discussions, we introduce some notations to model the interaction with the preference oracle $\Omega_{\mathcal{T}}$. Suppose the adaptation budget supports $K$ online preference queries to oracle $\Omega_{\mathcal{T}}$, which divides this interactive procedure into $K$ rounds. We define the notation of query context set as Definition~\ref{def:query_set} to represent the context information extracted from the preference queries during the online oracle interaction.

\begin{definition}[Query Context Set] \label{def:query_set}
	The query context set $\mathcal{Q}_k=\{\langle\tau_j^{(1)}\succ\tau_j^{(2)}\rangle\}_{j=1}^k$ denotes the set of preference queries completed at the first $k$ rounds, in which the task inference protocol queries the preference order between the trajectory pair $\langle\tau_j^{(1)}, \tau_j^{(2)}\rangle$ at the $j$th round. To simplify the notations, the trajectory pair $\langle\tau_j^{(1)},\tau_j^{(2)}\rangle$ are relabeled according to the oracle feedback so that the preference order given by oracle $\Omega$ is $\tau_j^{(1)}\succ \tau_j^{(2)}$ for any $1\leq j\leq k$.
\end{definition}

The query context set $\mathcal{Q}_k$ concludes the context information obtained from the oracle $\Omega$ in the first $k$ rounds. After completing the query at round $k$, the task inference algorithm needs to decide the next-round query trajectory pair $\langle\tau_{k+1}^{(1)}, \tau_{k+1}^{(2)}\rangle$ based on the context information stored in $\mathcal{Q}_k$. By leveraging the model-based problem transformation to \RenyiUlam's game, we can assess the quality of a task embedding $z$ by counting the number of mismatches with respect to oracle preferences, denoted by $\mismatch(z;\mathcal{Q}_k)$:
\begin{align}
	\mismatch(z;\mathcal{Q}_k) = \sum_{(\tau^{(1)}\succ\tau^{(2)})\in\mathcal{Q}_k} \mathbb{I}\left[f_\psi(\tau^{(1)}\succ\tau^{(2)}; z) < f_\psi(\tau^{(2)}\succ\tau^{(1)}; z)\right].
\end{align}

The overall algorithmic framework of our preference-based few-shot adaptation method, \textit{Adaptation with Noisy OracLE} (ANOLE), is summarized in Algorithm~\ref{alg:framework}.

\begin{algorithm}[h]
	\caption{Adaptation with Noisy OracLE (ANOLE)} \label{alg:framework}
	\begin{algorithmic}[1]
		\State {\bfseries input:} a preference oracle $\Omega_{\mathcal{T}}$, the budget of oracle queries $K$
		
		\hspace{0.095in} a prior task distribution $q(z)$, the size of candidate pool $M$
		
		\hspace{0.095in} a query generation protocol $\textsc{GenerateQuery}(\cdot)$
		\State $\widehat Z\gets \{z_j\sim q(z)\}_{j=1}^M$ \Comment{sample a candidate pool} \label{alg:sample_pool}
		\State $\mathcal{Q}_0\gets \emptyset$ \Comment{initialize context information}
		\For{$k=1$ {\bfseries to} $K$}
			\State $\langle\tau_k^{(1)},\tau_k^{(2)}\rangle\gets \textsc{GenerateQuery}(\widehat Z, \mathcal{Q}_{k-1})$ \Comment{\textbf{critical step:} query generation (Eq.~\eqref{eq:query_generation})} \label{alg:query_generation}
			\State $\langle\tau_k^{(u)}\succ \tau_k^{(v)}\rangle\gets \Omega_{\mathcal{T}}(\tau_j^{(1)},\tau_j^{(2)})$ where $u,v\in\{1,2\}$ \Comment{request oracle feedback}
			\State $\mathcal{Q}_k\gets \mathcal{Q}_{k-1}\cup \{\langle\tau_k^{(u)}\succ \tau_k^{(v)}\rangle\}$ \Comment{update context information}
		\EndFor
		\State {\bfseries return:} $\arg\min_{z\in\widehat Z} \mismatch(z;\mathcal{Q}_K)$ \label{alg:return}
	\end{algorithmic}
\end{algorithm}

The first step of our task inference algorithm is sampling a candidate pool $\widehat Z$ of latent task embeddings from the prior distribution $q(z)$. Then we perform $K$ rounds of candidate selection by querying the preference oracle $\Omega_{\mathcal{T}}$. The design of the query generation protocol $\textsc{GenerateQuery}(\cdot)$ is a critical component of our task inference algorithm and will be introduced in section~\ref{sec:query_generation}. The final decision would be the task embedding with minimum mismatch with the oracle preference, \ie, $\arg\min_{z\in\widehat Z} \mismatch(z;\mathcal{Q}_K)$.

\subsection{Error-Tolerant Task Inference for Noisy Preference Oracle} \label{sec:query_generation}

The problem transformation to \RenyiUlam's game enables us to leverage techniques from information theory to develop query strategy with both high feedback efficiency and error tolerance.

\paragraph{Binary-Search Paradigm.}
The basic idea is conducting a binary-search-like protocol to leverage the binary structure of preference feedback: \textit{After each round of oracle interaction, we shrink the candidate pool $\widehat Z$ by removing those task embeddings leading to wrong preference predictions $f_\psi(\cdot;z)$.} An ideal implementation of such a binary-search protocol with noiseless feedback is expected to roughly eliminate half of candidates using each single oracle preference feedback, which achieves the information-theoretic lower bound of interaction costs. In practice, we pursue to handle noisy feedback, since both the preference oracle $\Omega_{\mathcal{T}}$ and the preference predictor $f_\psi(\cdot;z)$ may carry errors. An error-tolerant binary-search protocol requires to establish an information quantification (\eg, an uncertainty metric) to evaluate the information gain of each noisy oracle feedback. The goal of oracle interaction is to rapidly reduce the uncertainty of task inference rather than simply eliminate the erroneous candidates. In this paper, we extend an information-theoretic tool called \textit{Berlekamp's volume} \citep{berlekamp1968block} to develop such an uncertainty quantification.

\paragraph{Berlekamp's Volume.}
One classical tool to deal with erroneous information in search problems is \textit{Berlekamp's volume}, which is first proposed by \citet{berlekamp1968block} and has been explored by subsequent works in numerous variants of \RenyiUlam's game \citep{rivest1980coping, pelc1987solution, lawler1995algorithm, aigner1996searching, cicalese2007perfect}. The primary purpose of Berlekamp's volume is to mimic the notion of Shannon entropy from information theory \citep{shannon1948mathematical} and specialize in the applications to noisy-channel communication \citep{shannon1956zero} and error-tolerant search \citep{renyi1961problem}. We refer readers to \citet{pelc2002searching} for a comprehensive literature review of the applications of Berlekamp's Volume and the solutions to \RenyiUlam's game.

In Definition~\ref{def:volume}, we rearrange the definition of Berlekamp's volume to suit the formulation of preference-based learning.

\begin{definition}[Berlekamp's Volume] \label{def:volume}
	Suppose the budget supports $K$ oracle queries in total, and the oracle may have at most $K_{E}$ incorrect feedback among these queries. Berlekamp's volume of a query context set $\mathcal{Q}_k$ is defined as follows:
	\begin{align} \label{eq:volume_def}
		\BVol(\mathcal{Q}_k) = \sum_{z\in\widehat Z} vol_z(\mathcal{Q}_k),
		\qquad
		vol_z(\mathcal{Q}_k) = \sum_{\ell=0}^{K_{E}-\mismatch(z;\mathcal{Q}_{k-1})} \binom{K-k}{\ell},
	\end{align}
	where $\binom{K-k}{\ell}$ denotes the binomial coefficient.
\end{definition}

As stated in Definition~\ref{def:volume}, the configuration of Berlekamp's volume has two hyper-parameters: the total number of queries $K$, and the maximum number of erroneous feedback $K_{E}$ within all queries. This error mode refers to \citet{berlekamp1968block}'s noisy-channel model. Berlekamp's volume is a tool for designing robust query strategy to guarantee the tolerance of bounded number of feedback errors.

\paragraph{What type of uncertainty does Berlekamp's volume characterize?}
Berlekamp's volume measures the uncertainty of the unknown latent task variable $z$ together with the randomness carried by the noisy oracle feedback. To give a more concrete view, we first show how the value of Berlekamp's volume $\BVol(\mathcal{Q}_k)$ changes when receiving the feedback of a new preference query. Depending on the binary oracle feedback for the query trajectory pair $(\tau_{k}^{(1)}, \tau_{k}^{(2)})$, the query context set may be updated to two possible statuses:
\begin{align}
	\mathcal{Q}^{(1)\succ(2)}_{k} = \mathcal{Q}_{k-1}\cup \left\{(\tau_{k}^{(1)}\succ \tau_{k}^{(2)})\right\}
	\quad\text{and}\quad
	\mathcal{Q}^{(2)\succ(1)}_{k} = \mathcal{Q}_{k-1}\cup \left\{(\tau_{k}^{(2)}\succ \tau_{k}^{(1)})\right\},
\end{align}
where $\mathcal{Q}^{(u)\succ(v)}_{k}$ denotes the updated status in the case of receiving oracle feedback $\tau_{k}^{(u)}\succ \tau_{k}^{(v)}$. The relation between $\BVol(\mathcal{Q}_{k-1})$ and $\BVol(\mathcal{Q}_{k})$ is characterized by Proposition~\ref{prop:volume_conservation}, which is the foundation of Berlekamp's volume for developing error-tolerant algorithms.
\begin{restatable}[Volume Conservation Law]{proposition}{VolumeProposition} \label{prop:volume_conservation}
	For any query context set $\mathcal{Q}_{k-1}$ and arbitrary query trajectory pair $(\tau_{k}^{(1)}, \tau_{k}^{(2)})$, the relation between $\BVol(\mathcal{Q}_{k-1})$ and $\BVol(\mathcal{Q}_{k})$ satisfies
	\begin{align}
		\BVol(\mathcal{Q}_{k-1}) = \BVol(\mathcal{Q}^{(1)\succ(2)}_{k}) + \BVol(\mathcal{Q}^{(2)\succ(1)}_{k}).
	\end{align}
\end{restatable}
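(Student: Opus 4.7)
The plan is to prove the identity pointwise in $z\in \widehat Z$ and then sum, since $\BVol$ is by definition additive over $\widehat Z$. So I would reduce the claim to showing
\begin{align*}
vol_z(\mathcal{Q}_{k-1}) = vol_z(\mathcal{Q}^{(1)\succ(2)}_{k}) + vol_z(\mathcal{Q}^{(2)\succ(1)}_{k})
\end{align*}
for an arbitrary fixed $z$, and arbitrary query pair $(\tau_k^{(1)},\tau_k^{(2)})$.

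The first substantive step is to track how $\mismatch(z;\cdot)$ changes when a new preference query is appended. The predictor $f_\psi(\cdot;z)$ deterministically prefers one of $\tau_k^{(1)}\succ \tau_k^{(2)}$ or $\tau_k^{(2)}\succ \tau_k^{(1)}$ (ties broken arbitrarily by the indicator), so exactly one of the two possible oracle responses agrees with the prediction. Writing $m := \mismatch(z;\mathcal{Q}_{k-1})$, this means one of $\mismatch(z;\mathcal{Q}^{(1)\succ(2)}_{k})$ and $\mismatch(z;\mathcal{Q}^{(2)\succ(1)}_{k})$ equals $m$ while the other equals $m+1$. Substituting into Definition~\ref{def:volume}, the identity to verify becomes the purely combinatorial statement
\begin{align*}
\sum_{\ell=0}^{K_{E}-m}\binom{K-k+1}{\ell} \;=\; \sum_{\ell=0}^{K_{E}-m}\binom{K-k}{\ell} \;+\; \sum_{\ell=0}^{K_{E}-m-1}\binom{K-k}{\ell}.
\end{align*}

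This follows immediately from Pascal's identity $\binom{N}{\ell}=\binom{N-1}{\ell}+\binom{N-1}{\ell-1}$ applied termwise to the left-hand side with $N=K-k+1$, after reindexing the shifted sum $\sum_{\ell=0}^{K_E-m}\binom{K-k}{\ell-1}=\sum_{\ell=0}^{K_E-m-1}\binom{K-k}{\ell}$ and using the convention $\binom{K-k}{-1}=0$. Summing the per-$z$ identity over $z\in\widehat Z$ then yields the claim.

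I expect no real obstacle beyond bookkeeping. The only delicate points are the degenerate regimes: if $m>K_{E}$, all three sums are empty and the identity reads $0=0+0$; if $m=K_{E}$, the last sum on the right is empty, the middle sum equals $1$, and the left equals $\binom{K-k+1}{0}=1$, so $1=1+0$ still holds. The main conceptual step is simply recognizing that the case split should be on the prediction of $f_\psi(\cdot;z)$ rather than on the oracle's answer, so that it is guaranteed that exactly one branch adds a mismatch and the other does not—this is what makes Pascal's identity precisely the right combinatorial identity.
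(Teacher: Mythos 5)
Your proof is correct and takes essentially the same route as the paper's: both reduce the claim to the per-$z$ identity $vol_z(\mathcal{Q}_{k-1}) = vol_z(\mathcal{Q}^{(1)\succ(2)}_{k}) + vol_z(\mathcal{Q}^{(2)\succ(1)}_{k})$, observe that exactly one of the two oracle responses agrees with the deterministic prediction of $f_\psi(\cdot;z)$ so that one branch keeps the mismatch count and the other increments it, apply Pascal's identity termwise with a reindexing of the shifted sum, and then sum over $z\in\widehat Z$. Your explicit check of the degenerate regimes $m\geq K_E$ is a minor addition that the paper leaves implicit.
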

The proofs of all statements presented in this section are deferred to Appendix~\ref{apx:proof}. As shown by Proposition~\ref{prop:volume_conservation}, each preference query would partition the volume $\BVol(\mathcal{Q}_{k-1})$ into two subsequent branches, $\BVol(\mathcal{Q}^{(1)\succ(2)}_{k})$ and $\BVol(\mathcal{Q}^{(2)\succ(1)}_{k})$. The selection of preference queries does not alter the volume sum of subsequent branches. Since the values of $\BVol(\cdot)$ are non-negative integers, the volume is monotonically decreased with the online query procedure. When the volume is eliminated to the unit value, the selection of task embedding with minimum mismatches $\mismatch(z;\mathcal{Q}_k)$ would become deterministic (see Proposition~\ref{prop:unit_leaf}).

\begin{restatable}[Unit of Volume]{proposition}{UnitProposition} \label{prop:unit_leaf}
	Given a query context set $\mathcal{Q}_k$ with $\BVol(\mathcal{Q}_k)=1$, there exists exactly one task embedding candidate $z\in\widehat Z$ satisfying $\mismatch(z;\mathcal{Q}_k)\leq K_{E}$.
\end{restatable}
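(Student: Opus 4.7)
My plan is to show that each summand $vol_z(\mathcal{Q}_k)$ in Definition~\ref{def:volume} acts like a non-negative-integer indicator: it vanishes when $\mismatch(z;\mathcal{Q}_k) > K_{E}$ and is at least $1$ when $\mismatch(z;\mathcal{Q}_k) \leq K_{E}$. Once this dichotomy is in hand, the hypothesis $\BVol(\mathcal{Q}_k)=1$ immediately forces exactly one candidate in $\widehat Z$ to fall within the error budget.

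First I would inspect the binomial sum $vol_z(\mathcal{Q}_k) = \sum_{\ell=0}^{K_{E} - \mismatch(z;\mathcal{Q}_k)} \binom{K-k}{\ell}$ by cases on the sign of its upper index. If $\mismatch(z;\mathcal{Q}_k) > K_{E}$ the upper index is negative, the range of summation is empty under the standard convention, and so $vol_z(\mathcal{Q}_k)=0$. If instead $\mismatch(z;\mathcal{Q}_k) \leq K_{E}$ the upper index is non-negative, so the sum contains the term $\binom{K-k}{0}=1$; since the remaining binomial coefficients are themselves non-negative integers, $vol_z(\mathcal{Q}_k) \geq 1$.

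Second I would substitute this dichotomy into $\BVol(\mathcal{Q}_k) = \sum_{z\in\widehat Z} vol_z(\mathcal{Q}_k)$. Let $S = \{z\in\widehat Z : \mismatch(z;\mathcal{Q}_k)\leq K_{E}\}$. The contributions from $z\notin S$ vanish and those from $z\in S$ are each a positive integer, whence $\BVol(\mathcal{Q}_k) \geq |S|$. Under the assumption $\BVol(\mathcal{Q}_k)=1$, the only way to write $1$ as a sum of non-negative integers is to have a single term equal to $1$ and all others equal to $0$, so $|S|=1$: exactly one candidate $z \in \widehat Z$ satisfies $\mismatch(z;\mathcal{Q}_k)\leq K_{E}$, yielding existence and uniqueness together.

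I do not anticipate a serious obstacle, as the argument reduces to careful bookkeeping with the empty-sum convention and the integrality of $vol_z(\mathcal{Q}_k)$. The one point worth flagging is that the formula in Definition~\ref{def:volume} must be read with $\sum_{\ell=0}^{m}(\cdot)=0$ whenever $m<0$; without this convention the cases corresponding to candidates outside the error budget would be undefined, and the central dichotomy driving the proof would collapse.
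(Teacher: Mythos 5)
Your proof is correct and follows essentially the same route as the paper's: both rest on the non-negativity and integrality of the summands $vol_z(\mathcal{Q}_k)$ together with the dichotomy that $vol_z(\mathcal{Q}_k)=0$ when $\mismatch(z;\mathcal{Q}_k)>K_E$ (empty sum) and $vol_z(\mathcal{Q}_k)\geq\binom{K-k}{0}=1$ otherwise, so that $\BVol(\mathcal{Q}_k)=1$ forces exactly one candidate within the error budget. Your write-up is in fact slightly more careful than the paper's, which leaves the empty-sum convention and the dichotomy implicit.
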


We can represent the preference-based task inference protocol by a \textit{decision tree}, where each $\mathcal{Q}_k$ with $\BVol(\mathcal{Q}_k)=1$ corresponds to a \textit{leaf node} and each preference query is a \textit{decision rule}. The value of Berlekamp's volume $\BVol(\mathcal{Q}_k)$ corresponds to the number of leaf nodes remaining in the subtree rooted with context set $\mathcal{Q}_k$. The value of $z$-conditioned volume $vol_z(\mathcal{Q}_k)$ counts the number of leaf nodes with $z$ as the final decision. Each path from an ancestor node to a leaf node can be mapped to a valid feedback sequence that does not violate $\mismatch(z;\mathcal{Q}_K)\leq K_E$. From this perspective, Berlekamp's volume quantifies the uncertainty of task inference by counting the number of valid feedback sequences for the incoming queries.

\paragraph{Error-Tolerant Query Strategy.}
Given Berlekamp's volume as the uncertainty quantification, the query strategy can be constructed directly by maximizing the worst-case uncertainty reduction:
\begin{align} \label{eq:query_generation}
	\textsc{GenerateQuery}(\widehat Z, \mathcal{Q}_{k-1}) = \mathop{\arg\min}_{\tau_k^{(1)}, \tau_k^{(2)}} \biggl(\max\Bigl\{ \BVol(\mathcal{Q}^{(1)\succ(2)}_{k}), ~\BVol(\mathcal{Q}^{(2)\succ(1)}_{k}) \Bigr\}\biggr),
\end{align}
where $\textsc{GenerateQuery}(\cdot)$ refers to the query generation step at line~\ref{alg:query_generation} of Algorithm~\ref{alg:framework}. This design follows the principle of binary search. If the Berlekamp's volumes of two subsequent branches are well balanced, the uncertainty can be exponentially reduced no matter which feedback the oracle responds. In our implementation, the $\arg\min$ operator in Eq.~\eqref{eq:query_generation} is approximated by sampling a mini-batch of trajectory pairs from the experience buffer. The query content is determined by finding the best trajectory pair within the sampled mini-batch. More implementation details are included in Appendix~\ref{apx:our_implementation}.


\section{Experiments}

In this section, we investigate the empirical performance of ANOLE on a suite of Meta-RL benchmark tasks. We compare our method with simple preference-based adaptation strategies and conduct several ablation studies to demonstrate the effectiveness of our algorithmic designs. 

\subsection{Experiment Setup}

\paragraph{Experiment Setting.}
We adopt six meta-RL benchmark tasks created by \citet{rothfuss2019promp}, which are widely used by meta-RL works to evaluate the performance of few-shot policy adaptation \citep{rakelly2019efficient, zintgraf2020varibad, fakoor2020meta}. These environment settings consider four ways to vary the task specification $\mathcal{T}$: forward/backward (-Fwd-Back), random target velocity (-Rand-Vel), random target direction (-Rand-Dir), and random target location (-Rand-Goal). We simulate the preference oracle $\Omega_{\mathcal{T}}$ by comparing the ground-truth trajectory return given by the MuJoCo-based environment simulator. The adaptation protocol cannot observe these environmental rewards during meta-testing and can only query the preference oracle $\Omega_{\mathcal{T}}$ to extract the task information. In addition, we impose a random noisy perturbation on the oracle feedback. Each binary feedback would be flipped with probability $\epsilon$. We consider such independently distributed errors rather than the bounded-number error mode to evaluate the empirical performance, since it is more realistic to simulate human's unintended errors. A detailed description of the experiment setting is included in Appendix~\ref{apx:experiment_setting}.

\paragraph{Implementation of ANOLE.}
Note that ANOLE is an adaptation module and can be built upon any meta-RL or multi-task RL algorithms with latent policy encoding \citep{hausman2018learning, eysenbach2019diversity, pong2020skew, lynch2020learning, gupta2020unsupervised}. To align with the baseline algorithms, we implement a meta-training procedure similar to PEARL \citep{rakelly2019efficient}. The same as PEARL, our policy optimization module extends soft actor-critic \citep[SAC;][]{haarnoja2018soft} with the latent task embedding $z$, \ie, the policy and value functions are represented by $\pi_\theta(a\mid s;z)$ and $Q_\phi(s,a;z)$ where $\theta$ and $\phi$ denote the parameters. One difference from PEARL is the removal of the inference network, since it is no longer used in meta-testing. Instead, we set up a bunch of trainable latent task variables $\{z_i\}_{i=1}^N$ to learn the multi-task policy. More implementation details are included in Appendix~\ref{apx:our_implementation}. The source code of our ANOLE implementation and experiment scripts are available at \url{https://github.com/Stilwell-Git/Adaptation-with-Noisy-OracLE}.

\paragraph{Baseline Algorithms.}
We compare the performance with four baseline adaptation strategies. Two of these baselines are built upon the same meta-training pre-processing as ANOLE but do not use Berlekamp's volume to construct preference queries.
\begin{itemize}
	\item \textbf{Greedy Binary Search.} We conduct a simple and direct implementation of binary-search-like task inference. When generating preference queries, it simply ignores all candidates that have made at least one wrong preference predictions, and the query trajectory pair only aims to partition the remaining candidate pool into two balanced separations.
	\item \textbf{Random Query.} We also include the simplest baseline that constructs the preference query by drawing a random trajectory pair from the experience buffer. This baseline serves an ablation study to investigate the benefits of removing the inference network. 
\end{itemize}

In addition, we implement two variants of PEARL \citep{rakelly2019efficient} that use a probabilistic context encoder to infer task specification from preference-based feedback.
\begin{itemize}
	\item \textbf{PEARL.} We modify PEARL's context encoder to handle the preference-based feedback. More specifically, the context encoder is a LSTM-based network that takes an ordered trajectory pair as inputs to make up the task embedding $z$, \ie, the input trajectory pair has been sorted by oracle preference. In meta-training, the preference is labeled by the ground-truth rewards from the environment simulator. During adaptation, the PEARL-based agent draws random trajectory pairs from the experience buffer to query the preference oracle, and the oracle feedback is used to update the posterior of task variable $z$.
	\item \textbf{PEARL+Augmentation.} We implement a data augmentation method for PEARL's meta-training to pursue the error tolerance of preference-based inference network. We impose random errors to the preference comparison when training the preference-based context encoder so that the inference network is expected to have some extent of error tolerance.
\end{itemize}

We include more implementation details of these baselines in Appendix~\ref{apx:pearl_implementation}.

\subsection{Performance Evaluation on MuJoCo-based Meta-RL Benchmark Tasks}
\label{sec:main-experiments}

\begin{figure}
	\centering
	\includegraphics[width=0.98\linewidth]{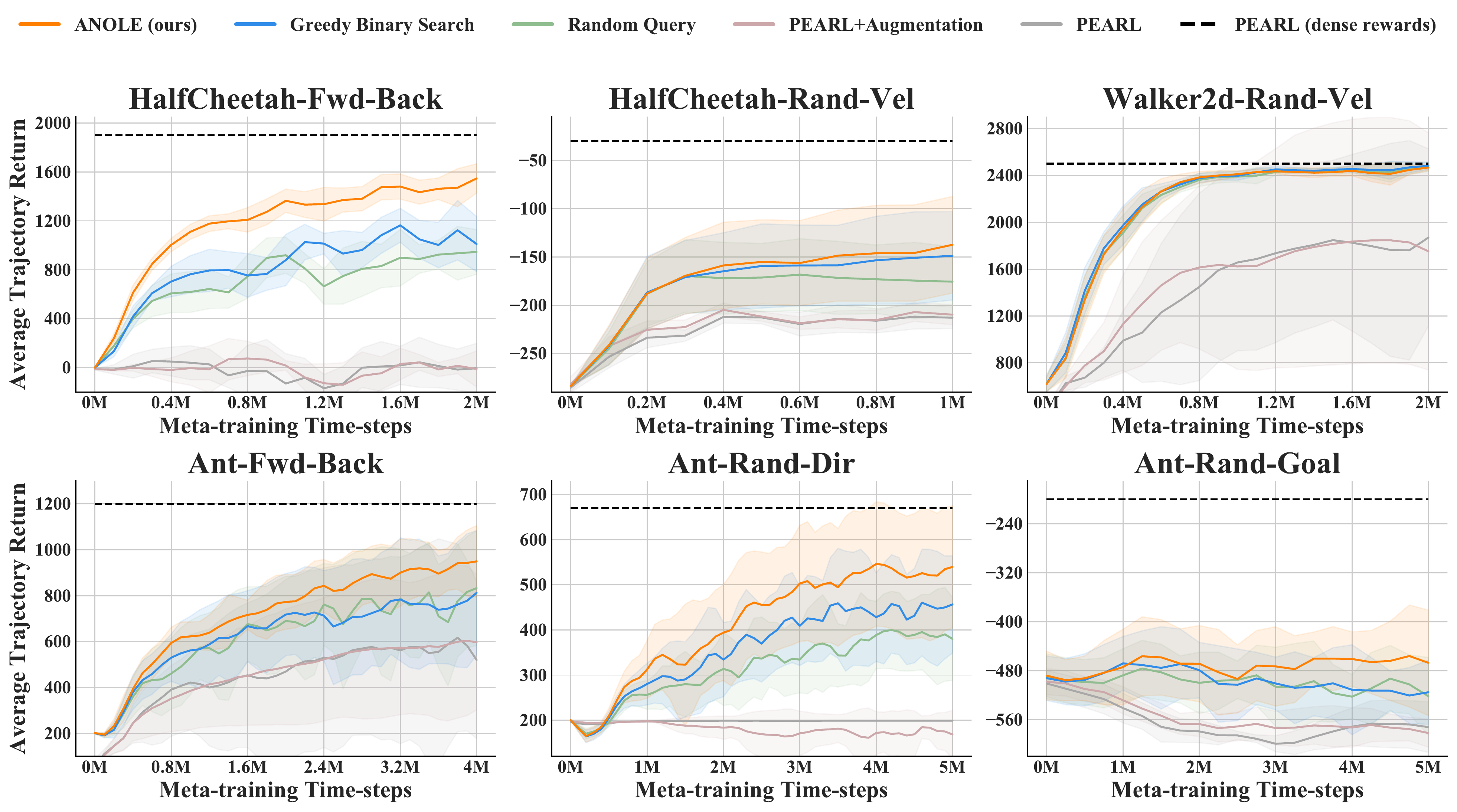}
	\caption{Learning curves on a suite of MuJoCo-based meta-RL benchmark tasks with preference-based adaptation. All curves plot the average performance from eight runs with random initialization. The shaded region indicates the standard deviation. ``PEARL (dense rewards)'' denotes the final performance of ordinary PEARL using dense reward signals for meta-testing.}
	\label{fig:main_results}
	\vspace{-0.1in}
\end{figure}

Figure~\ref{fig:main_results} presents the performance evaluation of ANOLE and baseline algorithms on a suite of meta-RL benchmark tasks with noisy preference oracle. The adaptation algorithms are restricted to use $K=10$ preference queries, and the noisy oracle would give wrong feedback for each query with probability $\epsilon=0.2$. We configure ANOLE with $K_E=2$ to compute Berlekamp's volume. The experiment results indicate two conclusions:
\begin{enumerate}
	\item Berlekamp's volume improves the error tolerance of preference-based task inference. The only difference between ANOLE and the first two baselines, \textit{Greedy Binary Search} and \textit{Random Query}, is the utilization of Berlekamp's volume, which leads to a significant performance gap on benchmark tasks.
	\item The non-parametric inference framework of ANOLE improves the scalability of preference-based few-shot adaptation. Note that, using ANOLE's algorithmic framework (see Algorithm~\ref{alg:framework}), a random query strategy can also outperform PEARL-based baselines. It may because the inference network used by PEARL cannot effectively translate the binary preference feedback to the task information.
\end{enumerate}

\subsection{Ablation Studies on the Magnitude of Oracle Noise} \label{sec:noise_ablation}

To demonstrate the functionality of Berlekamp's volume on improving error tolerance, we conduct an ablation study on the magnitude of oracle noise. In Figure~\ref{fig:noise_ablation}, we contrast the performance of ANOLE in adaptation settings with and without oracle noises, \ie, $\epsilon=0.2$ \textit{vs.} $\epsilon=0.0$. When the preference oracle does not carry any error, the simple greedy baseline can achieve the same performance as ANOLE. If we impose noises to the oracle feedback, ANOLE would suffer from a performance drop, where the drop magnitude is much more moderate than that of the greedy strategy. This result indicates that Berlekamp's volume does provide remarkable benefits on improving error tolerance whereas cannot completely eliminate the negative effects of oracle noises. It opens up a problem for future work to further improve the error tolerance of preference-based few-shot adaptation. In Appendix~\ref{apx:noise_ablation}, we conduct more ablation studies to understand the algorithmic functionality of ANOLE.

\begin{figure}
	\centering
	\includegraphics[width=0.98\linewidth]{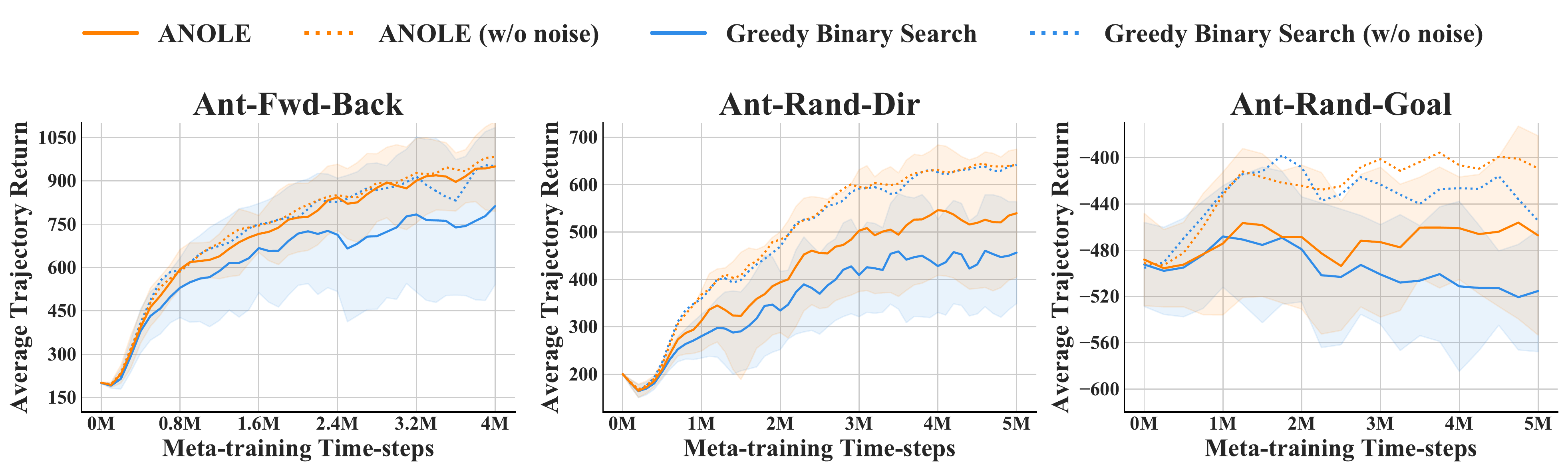}
	\caption{Investigating the impact of oracle noises on performance of preference-based adaptation. The dotted curves with tag "(w/o noise)" refer to the performance evaluation with a noiseless oracle.}
	\label{fig:noise_ablation}
	\vspace{-0.1in}
\end{figure}


\subsection{Experiments with Human Feedback}

\begin{wraptable}[9]{R}{0.44\linewidth}
	\centering\vspace{-0.17in}
	\caption{Evaluating the performance of ANOLE using human feedback.}
	\label{table:human-experiments}
	\begin{tabular}{cc}
		\toprule
		Task & ANOLE \\
		\midrule
		HalfCheetah-Fwd-Back & 1734.3$\pm$10.3 \\
		Ant-Fwd-Back & 931.1$\pm$38.0 \\
		Ant-Rand-Dir & 644.8$\pm$63.2 \\
		\bottomrule
	\end{tabular}
\end{wraptable}
We conduct experiments with real human feedback on MuJoCo-based meta-RL benchmark tasks. The meta-testing task specifications are generated by the same rule as the experiments in section~\ref{sec:main-experiments}. To facilitate human participation, we project the agent trajectory and the goal direction vector to a 2D coordinate system. The human participant watches the query trajectory pair and labels the preference according to the assigned task goal vector. The performance of ANOLE with human feedback is presented in Table~\ref{table:human-experiments}. Each entry is evaluated by 20 runs (\ie, using $20\times10=200$ human queries). We note that, in these experiments, the feedback accuracy of the human participant is better than the uniform-noise oracle considered in section~\ref{sec:main-experiments}. The average error rate of human feedback is $6.2\%$ over all experiments. The evaluation results are thus better than the experiments in Figure~\ref{fig:main_results}. We include a visualization of the human-agent interaction interface in Appendix~\ref{apx:human-interface}.
\section{Related Work}

\paragraph{Meta-Learning and Meta-RL.}
Modeling task specification by a latent task embedding (or latent task variable) is a widely-applied technique in meta-learning for both supervised learning \citep{rusu2019meta, gordon2019meta} and reinforcement learning \citep{rakelly2019efficient}. This paper studies the adaptation of latent task embedding based on preference-based supervision. One characteristic of our proposed approach is the non-parametric nature of the task inference protocol. Non-parametric adaptation has been studied in supervised meta-learning \citep{vinyals2016matching, snell2017prototypical, allen2019infinite} but is rarely applied to meta-RL. From this perspective, our algorithmic framework opens up a new methodology for few-shot policy adaptation.

\paragraph{Preference-based RL and Human-in-the-loop Learning.}
Learning from human's preference ranking is a classical problem formulation of human-in-the-loop RL \citep{akrour2011preference, akrour2012april, furnkranz2012preference, wilson2012bayesian}. When combining with deep RL techniques, recent advances focus on learning an auxiliary reward function that decomposes the trajectory-wise preference feedback to per-step supervision \citep{christiano2017deep, ibarz2018reward}. Several methods have been developed to generate informative queries \citep{lee2021pebble}, improve the efficiency of data utilization \citep{park2022surf}, and develop specialized exploration for preference-based RL \citep{liang2022reward}. In this paper, we open up a new problem for preference-based RL, \ie, preference-based few-shot policy adaptation. The potential applications of our problem setting may be similar to that of personalized adaptation \citep{yu2021personalized, wang2021preference}, a supervised meta-learning problem for modeling user preference. A future work is considering a wider range of human supervision, such as human attention \citep{zhang2020atari} and human annotation \citep{guan2021widening}.

\section{Conclusion and Discussions}

In this paper, we study the problem of few-shot policy adaptation with preference-based feedback and propose a novel meta-RL algorithm, called \textit{Adaptation with Noisy OracLE} (ANOLE). Our method leverages a classical problem formulation called \RenyiUlam's game to model the task inference problem with a noisy preference oracle. This connection to information theory enable us to extend the technique of Berlekamp's volume to establish an error-tolerant approach for preference-based task inference, which is demonstrated as a promising approach on an extensive set of benchmark tasks.

We conclude this paper by discussing limitations, future works, and other relevant aspects that have not been covered.

\paragraph{Adaptation to Environment Dynamics.}
One limitation of ANOLE is that it does not consider the potential shift of transition dynamics from meta-training environments to meta-testing environments. Our problem formulation assumes only the reward function alters in the adaptation phase (see section~\ref{sec:problem-formulation}). The current implementation of ANOLE cannot handle the adaptation to the changing environment dynamics. One promising way to address this issue is to integrate ANOLE with classical meta-RL modules to model the probabilistic inference regarding the transition function. In addition, it is critical to investigate the behavior of human preference when the query trajectories may contain unrealistic transitions.

\paragraph{Expressiveness of Preference Partial Ordering.}
A recent theoretical work indicates that the trajectory-wise partial-order preference can define richer agent behaviors than step-wise reward functions \citep{abel2021expressivity}. However, this superior of preference-based learning has rarely been shown in the empirical studies. \eg, in our experiments, the preference feedback is simulated by summing step-wise rewards given by the MuJoCo simulator, which is a common setting used by most preference-based RL works. In addition, most advanced preference-based RL algorithms are built on a methodology that linearly decomposes the trajectory-wise preference supervision to a step-wise auxiliary reward function, which degrades the expressiveness of the partial ordering preference system. These problems are fundamental challenges to preference-based learning.

\begin{ack}
	This work is supported by National Key R\&D Program of China No. 2021YFF1201600.
\end{ack}

\bibliography{ref}

\begin{thebibliography}{69}
\providecommand{\natexlab}[1]{#1}
\providecommand{\url}[1]{\texttt{#1}}
\expandafter\ifx\csname urlstyle\endcsname\relax
  \providecommand{\doi}[1]{doi: #1}\else
  \providecommand{\doi}{doi: \begingroup \urlstyle{rm}\Url}\fi

\bibitem[Abel et~al.(2021)Abel, Dabney, Harutyunyan, Ho, Littman, Precup, and
  Singh]{abel2021expressivity}
David Abel, Will Dabney, Anna Harutyunyan, Mark~K Ho, Michael Littman, Doina
  Precup, and Satinder Singh.
\newblock On the expressivity of {Markov} reward.
\newblock In \emph{Advances in Neural Information Processing Systems},
  volume~34, 2021.

\bibitem[Agrawal and Goyal(2012)]{agrawal2012analysis}
Shipra Agrawal and Navin Goyal.
\newblock Analysis of thompson sampling for the multi-armed bandit problem.
\newblock In \emph{Conference on learning theory}, pages 39--1. JMLR Workshop
  and Conference Proceedings, 2012.

\bibitem[Aigner(1996)]{aigner1996searching}
Martin Aigner.
\newblock Searching with lies.
\newblock \emph{Journal of Combinatorial Theory, Series A}, 74\penalty0
  (1):\penalty0 43--56, 1996.

\bibitem[Akrour et~al.(2011)Akrour, Schoenauer, and
  Sebag]{akrour2011preference}
Riad Akrour, Marc Schoenauer, and Michele Sebag.
\newblock Preference-based policy learning.
\newblock In \emph{Joint European Conference on Machine Learning and Knowledge
  Discovery in Databases}, pages 12--27. Springer, 2011.

\bibitem[Akrour et~al.(2012)Akrour, Schoenauer, and Sebag]{akrour2012april}
Riad Akrour, Marc Schoenauer, and Mich{\`e}le Sebag.
\newblock {APRIL}: Active preference learning-based reinforcement learning.
\newblock In \emph{Joint European conference on machine learning and knowledge
  discovery in databases}, pages 116--131. Springer, 2012.

\bibitem[Allen et~al.(2019)Allen, Shelhamer, Shin, and
  Tenenbaum]{allen2019infinite}
Kelsey Allen, Evan Shelhamer, Hanul Shin, and Joshua Tenenbaum.
\newblock Infinite mixture prototypes for few-shot learning.
\newblock In \emph{International Conference on Machine Learning}, pages
  232--241. PMLR, 2019.

\bibitem[Amodei et~al.(2016)Amodei, Olah, Steinhardt, Christiano, Schulman, and
  Man{\'e}]{amodei2016concrete}
Dario Amodei, Chris Olah, Jacob Steinhardt, Paul Christiano, John Schulman, and
  Dan Man{\'e}.
\newblock Concrete problems in ai safety.
\newblock \emph{arXiv preprint arXiv:1606.06565}, 2016.

\bibitem[Berlekamp(1968)]{berlekamp1968block}
Elwyn~R Berlekamp.
\newblock Block coding for the binary symmetric channel with noiseless,
  delayless feedback.
\newblock \emph{Error-correcting codes}, pages 61--68, 1968.

\bibitem[Berner et~al.(2019)Berner, Brockman, Chan, Cheung, D{\k{e}}biak,
  Dennison, Farhi, Fischer, Hashme, Hesse, et~al.]{berner2019dota}
Christopher Berner, Greg Brockman, Brooke Chan, Vicki Cheung, Przemys{\l}aw
  D{\k{e}}biak, Christy Dennison, David Farhi, Quirin Fischer, Shariq Hashme,
  Chris Hesse, et~al.
\newblock Dota 2 with large scale deep reinforcement learning.
\newblock \emph{arXiv preprint arXiv:1912.06680}, 2019.

\bibitem[Bradley and Terry(1952)]{bradley1952rank}
Ralph~Allan Bradley and Milton~E Terry.
\newblock Rank analysis of incomplete block designs: I. the method of paired
  comparisons.
\newblock \emph{Biometrika}, 39\penalty0 (3/4):\penalty0 324--345, 1952.

\bibitem[Brunskill and Li(2014)]{brunskill2014pac}
Emma Brunskill and Lihong Li.
\newblock Pac-inspired option discovery in lifelong reinforcement learning.
\newblock In \emph{International conference on machine learning}, pages
  316--324. PMLR, 2014.

\bibitem[Christiano et~al.(2017)Christiano, Leike, Brown, Martic, Legg, and
  Amodei]{christiano2017deep}
Paul~F Christiano, Jan Leike, Tom Brown, Miljan Martic, Shane Legg, and Dario
  Amodei.
\newblock Deep reinforcement learning from human preferences.
\newblock In \emph{Advances in Neural Information Processing Systems},
  volume~30, 2017.

\bibitem[Chua et~al.(2021)Chua, Lei, and Lee]{chua2021provable}
Kurtland Chua, Qi~Lei, and Jason~D Lee.
\newblock Provable hierarchy-based meta-reinforcement learning.
\newblock \emph{arXiv preprint arXiv:2110.09507}, 2021.

\bibitem[Cicalese and Deppe(2007)]{cicalese2007perfect}
Ferdinando Cicalese and Christian Deppe.
\newblock Perfect minimally adaptive q-ary search with unreliable tests.
\newblock \emph{Journal of statistical planning and inference}, 137\penalty0
  (1):\penalty0 162--175, 2007.

\bibitem[Duan et~al.(2016)Duan, Schulman, Chen, Bartlett, Sutskever, and
  Abbeel]{duan2016rl}
Yan Duan, John Schulman, Xi~Chen, Peter~L Bartlett, Ilya Sutskever, and Pieter
  Abbeel.
\newblock {RL}{$^2$}: Fast reinforcement learning via slow reinforcement
  learning.
\newblock \emph{arXiv preprint arXiv:1611.02779}, 2016.

\bibitem[Eysenbach et~al.(2019)Eysenbach, Gupta, Ibarz, and
  Levine]{eysenbach2019diversity}
Benjamin Eysenbach, Abhishek Gupta, Julian Ibarz, and Sergey Levine.
\newblock Diversity is all you need: Learning skills without a reward function.
\newblock In \emph{International Conference on Learning Representations}, 2019.

\bibitem[Fakoor et~al.(2020)Fakoor, Chaudhari, Soatto, and
  Smola]{fakoor2020meta}
Rasool Fakoor, Pratik Chaudhari, Stefano Soatto, and Alexander~J Smola.
\newblock Meta-q-learning.
\newblock In \emph{International Conference on Learning Representations}, 2020.

\bibitem[Finn et~al.(2017)Finn, Abbeel, and Levine]{finn2017model}
Chelsea Finn, Pieter Abbeel, and Sergey Levine.
\newblock Model-agnostic meta-learning for fast adaptation of deep networks.
\newblock In \emph{International conference on machine learning}, pages
  1126--1135. PMLR, 2017.

\bibitem[Fujimoto et~al.(2019)Fujimoto, Meger, and Precup]{fujimoto2019off}
Scott Fujimoto, David Meger, and Doina Precup.
\newblock Off-policy deep reinforcement learning without exploration.
\newblock In \emph{International Conference on Machine Learning}, pages
  2052--2062. PMLR, 2019.

\bibitem[F{\"u}rnkranz et~al.(2012)F{\"u}rnkranz, H{\"u}llermeier, Cheng, and
  Park]{furnkranz2012preference}
Johannes F{\"u}rnkranz, Eyke H{\"u}llermeier, Weiwei Cheng, and Sang-Hyeun
  Park.
\newblock Preference-based reinforcement learning: a formal framework and a
  policy iteration algorithm.
\newblock \emph{Machine learning}, 89\penalty0 (1):\penalty0 123--156, 2012.

\bibitem[Gordon et~al.(2019)Gordon, Bronskill, Bauer, Nowozin, and
  Turner]{gordon2019meta}
Jonathan Gordon, John Bronskill, Matthias Bauer, Sebastian Nowozin, and Richard
  Turner.
\newblock Meta-learning probabilistic inference for prediction.
\newblock In \emph{International Conference on Learning Representations}, 2019.

\bibitem[Guan et~al.(2021)Guan, Verma, Guo, Zhang, and
  Kambhampati]{guan2021widening}
Lin Guan, Mudit Verma, Suna~Sihang Guo, Ruohan Zhang, and Subbarao Kambhampati.
\newblock Widening the pipeline in human-guided reinforcement learning with
  explanation and context-aware data augmentation.
\newblock In \emph{Advances in Neural Information Processing Systems},
  volume~34, pages 21885--21897, 2021.

\bibitem[Gupta et~al.(2020)Gupta, Eysenbach, Finn, and
  Levine]{gupta2020unsupervised}
Abhishek Gupta, Benjamin Eysenbach, Chelsea Finn, and Sergey Levine.
\newblock Unsupervised meta-learning for reinforcement learning.
\newblock In \emph{International Conference on Learning Representations}, 2020.

\bibitem[Haarnoja et~al.(2018)Haarnoja, Zhou, Abbeel, and
  Levine]{haarnoja2018soft}
Tuomas Haarnoja, Aurick Zhou, Pieter Abbeel, and Sergey Levine.
\newblock Soft actor-critic: Off-policy maximum entropy deep reinforcement
  learning with a stochastic actor.
\newblock In \emph{International conference on machine learning}, pages
  1861--1870. PMLR, 2018.

\bibitem[Hausman et~al.(2018)Hausman, Springenberg, Wang, Heess, and
  Riedmiller]{hausman2018learning}
Karol Hausman, Jost~Tobias Springenberg, Ziyu Wang, Nicolas Heess, and Martin
  Riedmiller.
\newblock Learning an embedding space for transferable robot skills.
\newblock In \emph{International Conference on Learning Representations}, 2018.

\bibitem[Hua et~al.(2021)Hua, Wang, Jin, Li, Yan, He, and Zha]{hua2021hmrl}
Yun Hua, Xiangfeng Wang, Bo~Jin, Wenhao Li, Junchi Yan, Xiaofeng He, and
  Hongyuan Zha.
\newblock Hmrl: Hyper-meta learning for sparse reward reinforcement learning
  problem.
\newblock In \emph{Proceedings of the 27th ACM SIGKDD Conference on Knowledge
  Discovery \& Data Mining}, pages 637--645, 2021.

\bibitem[Ibarz et~al.(2018)Ibarz, Leike, Pohlen, Irving, Legg, and
  Amodei]{ibarz2018reward}
Borja Ibarz, Jan Leike, Tobias Pohlen, Geoffrey Irving, Shane Legg, and Dario
  Amodei.
\newblock Reward learning from human preferences and demonstrations in atari.
\newblock In \emph{Advances in neural information processing systems},
  volume~31, 2018.

\bibitem[Kingma and Ba(2015)]{kingma2015adam}
Diederik~P Kingma and Jimmy Ba.
\newblock Adam: A method for stochastic optimization.
\newblock In \emph{International Conference on Learning Representations}, 2015.

\bibitem[Lawler and Sarkissian(1995)]{lawler1995algorithm}
Eugene~L Lawler and Sergei Sarkissian.
\newblock An algorithm for “{Ulam}'s game” and its application to error
  correcting codes.
\newblock \emph{Information processing letters}, 56\penalty0 (2):\penalty0
  89--93, 1995.

\bibitem[Lee et~al.(2021{\natexlab{a}})Lee, Smith, Dragan, and
  Abbeel]{lee2021b}
Kimin Lee, Laura Smith, Anca Dragan, and Pieter Abbeel.
\newblock B-pref: Benchmarking preference-based reinforcement learning.
\newblock In \emph{Advances in Neural Information Processing Systems Datasets
  and Benchmarks Track}, 2021{\natexlab{a}}.

\bibitem[Lee et~al.(2021{\natexlab{b}})Lee, Smith, and Abbeel]{lee2021pebble}
Kimin Lee, Laura~M Smith, and Pieter Abbeel.
\newblock {PEBBLE}: Feedback-efficient interactive reinforcement learning via
  relabeling experience and unsupervised pre-training.
\newblock In \emph{International Conference on Machine Learning}, pages
  6152--6163. PMLR, 2021{\natexlab{b}}.

\bibitem[Liang et~al.(2022)Liang, Shu, Lee, and Abbeel]{liang2022reward}
Xinran Liang, Katherine Shu, Kimin Lee, and Pieter Abbeel.
\newblock Reward uncertainty for exploration in preference-based reinforcement
  learning.
\newblock In \emph{International Conference on Learning Representations}, 2022.

\bibitem[Liu et~al.(2021)Liu, Raghunathan, Liang, and Finn]{liu2021decoupling}
Evan~Z Liu, Aditi Raghunathan, Percy Liang, and Chelsea Finn.
\newblock Decoupling exploration and exploitation for meta-reinforcement
  learning without sacrifices.
\newblock In \emph{International Conference on Machine Learning}, pages
  6925--6935. PMLR, 2021.

\bibitem[Liu et~al.(2020)Liu, Raghunathan, Liang, and Finn]{liu2020explore}
Evan~Zheran Liu, Aditi Raghunathan, Percy Liang, and Chelsea Finn.
\newblock Explore then execute: Adapting without rewards via factorized
  meta-reinforcement learning.
\newblock 2020.

\bibitem[Loewenstein and Prelec(1992)]{loewenstein1992anomalies}
George Loewenstein and Drazen Prelec.
\newblock Anomalies in intertemporal choice: Evidence and an interpretation.
\newblock \emph{The Quarterly Journal of Economics}, 107\penalty0 (2):\penalty0
  573--597, 1992.

\bibitem[Lynch et~al.(2020)Lynch, Khansari, Xiao, Kumar, Tompson, Levine, and
  Sermanet]{lynch2020learning}
Corey Lynch, Mohi Khansari, Ted Xiao, Vikash Kumar, Jonathan Tompson, Sergey
  Levine, and Pierre Sermanet.
\newblock Learning latent plans from play.
\newblock In \emph{Conference on robot learning}, pages 1113--1132. PMLR, 2020.

\bibitem[Mnih et~al.(2015)Mnih, Kavukcuoglu, Silver, Rusu, Veness, Bellemare,
  Graves, Riedmiller, Fidjeland, Ostrovski, et~al.]{mnih2015human}
Volodymyr Mnih, Koray Kavukcuoglu, David Silver, Andrei~A Rusu, Joel Veness,
  Marc~G Bellemare, Alex Graves, Martin Riedmiller, Andreas~K Fidjeland, Georg
  Ostrovski, et~al.
\newblock Human-level control through deep reinforcement learning.
\newblock \emph{nature}, 518\penalty0 (7540):\penalty0 529--533, 2015.

\bibitem[Osband et~al.(2013)Osband, Russo, and Van~Roy]{osband2013more}
Ian Osband, Daniel Russo, and Benjamin Van~Roy.
\newblock (more) efficient reinforcement learning via posterior sampling.
\newblock \emph{Advances in Neural Information Processing Systems}, 26, 2013.

\bibitem[Park et~al.(2022)Park, Seo, Shin, Lee, Abbeel, and Lee]{park2022surf}
Jongjin Park, Younggyo Seo, Jinwoo Shin, Honglak Lee, Pieter Abbeel, and Kimin
  Lee.
\newblock {SURF}: Semi-supervised reward learning with data augmentation for
  feedback-efficient preference-based reinforcement learning.
\newblock In \emph{International Conference on Learning Representations}, 2022.

\bibitem[Pelc(1987)]{pelc1987solution}
Andrzej Pelc.
\newblock Solution of {Ulam}'s problem on searching with a lie.
\newblock \emph{Journal of Combinatorial Theory, Series A}, 44\penalty0
  (1):\penalty0 129--140, 1987.

\bibitem[Pelc(2002)]{pelc2002searching}
Andrzej Pelc.
\newblock Searching games with errors—fifty years of coping with liars.
\newblock \emph{Theoretical Computer Science}, 270\penalty0 (1-2):\penalty0
  71--109, 2002.

\bibitem[Pong et~al.(2020)Pong, Dalal, Lin, Nair, Bahl, and
  Levine]{pong2020skew}
Vitchyr Pong, Murtaza Dalal, Steven Lin, Ashvin Nair, Shikhar Bahl, and Sergey
  Levine.
\newblock Skew-fit: State-covering self-supervised reinforcement learning.
\newblock In \emph{International Conference on Machine Learning}, pages
  7783--7792. PMLR, 2020.

\bibitem[Prewett et~al.(2010)Prewett, Johnson, Saboe, Elliott, and
  Coovert]{prewett2010managing}
Matthew~S Prewett, Ryan~C Johnson, Kristin~N Saboe, Linda~R Elliott, and
  Michael~D Coovert.
\newblock Managing workload in human--robot interaction: A review of empirical
  studies.
\newblock \emph{Computers in Human Behavior}, 26\penalty0 (5):\penalty0
  840--856, 2010.

\bibitem[Rakelly et~al.(2019)Rakelly, Zhou, Quillen, Finn, and
  Levine]{rakelly2019efficient}
Kate Rakelly, Aurick Zhou, Deirdre Quillen, Chelsea Finn, and Sergey Levine.
\newblock Efficient off-policy meta-reinforcement learning via probabilistic
  context variables.
\newblock In \emph{International conference on machine learning}, pages
  5331--5340. PMLR, 2019.

\bibitem[Ren et~al.(2022)Ren, Guo, Zhou, and Peng]{ren2022learning}
Zhizhou Ren, Ruihan Guo, Yuan Zhou, and Jian Peng.
\newblock Learning long-term reward redistribution via randomized return
  decomposition.
\newblock In \emph{International Conference on Learning Representations}, 2022.

\bibitem[R{\'e}nyi(1961)]{renyi1961problem}
Alfr{\'e}d R{\'e}nyi.
\newblock On a problem of information theory.
\newblock \emph{MTA Mat. Kut. Int. Kozl. B}, 6\penalty0 (MR143666):\penalty0
  505--516, 1961.

\bibitem[R{\'e}nyi and Makkai-Bencs{\'a}th(1984)]{renyi1984diary}
Alfr{\'e}d R{\'e}nyi and Zsuzsanna Makkai-Bencs{\'a}th.
\newblock \emph{A diary on information theory}.
\newblock Akad{\'e}miai Kiad{\'o} Budapest, 1984.

\bibitem[Rivest et~al.(1980)Rivest, Meyer, Kleitman, Winklmann, and
  Spencer]{rivest1980coping}
Ronald~L. Rivest, Albert~R. Meyer, Daniel~J. Kleitman, Karl Winklmann, and Joel
  Spencer.
\newblock Coping with errors in binary search procedures.
\newblock \emph{Journal of Computer and System Sciences}, 20\penalty0
  (3):\penalty0 396--404, 1980.

\bibitem[Rothfuss et~al.(2019)Rothfuss, Lee, Clavera, Asfour, and
  Abbeel]{rothfuss2019promp}
Jonas Rothfuss, Dennis Lee, Ignasi Clavera, Tamim Asfour, and Pieter Abbeel.
\newblock {ProMP}: Proximal meta-policy search.
\newblock In \emph{International Conference on Learning Representations}, 2019.

\bibitem[Russo and Van~Roy(2014)]{russo2014learning}
Daniel Russo and Benjamin Van~Roy.
\newblock Learning to optimize via posterior sampling.
\newblock \emph{Mathematics of Operations Research}, 39\penalty0 (4):\penalty0
  1221--1243, 2014.

\bibitem[Rusu et~al.(2019)Rusu, Rao, Sygnowski, Vinyals, Pascanu, Osindero, and
  Hadsell]{rusu2019meta}
Andrei~A Rusu, Dushyant Rao, Jakub Sygnowski, Oriol Vinyals, Razvan Pascanu,
  Simon Osindero, and Raia Hadsell.
\newblock Meta-learning with latent embedding optimization.
\newblock In \emph{International Conference on Learning Representations}, 2019.

\bibitem[Shannon(1948)]{shannon1948mathematical}
Claude Shannon.
\newblock A mathematical theory of communication.
\newblock \emph{The Bell system technical journal}, 27\penalty0 (3):\penalty0
  379--423, 1948.

\bibitem[Shannon(1956)]{shannon1956zero}
Claude Shannon.
\newblock The zero error capacity of a noisy channel.
\newblock \emph{IRE Transactions on Information Theory}, 2\penalty0
  (3):\penalty0 8--19, 1956.

\bibitem[Silver et~al.(2016)Silver, Huang, Maddison, Guez, Sifre, Van
  Den~Driessche, Schrittwieser, Antonoglou, Panneershelvam, Lanctot,
  et~al.]{silver2016mastering}
David Silver, Aja Huang, Chris~J Maddison, Arthur Guez, Laurent Sifre, George
  Van Den~Driessche, Julian Schrittwieser, Ioannis Antonoglou, Veda
  Panneershelvam, Marc Lanctot, et~al.
\newblock Mastering the game of go with deep neural networks and tree search.
\newblock \emph{nature}, 529\penalty0 (7587):\penalty0 484--489, 2016.

\bibitem[Snell et~al.(2017)Snell, Swersky, and Zemel]{snell2017prototypical}
Jake Snell, Kevin Swersky, and Richard Zemel.
\newblock Prototypical networks for few-shot learning.
\newblock In \emph{Advances in neural information processing systems},
  volume~30, 2017.

\bibitem[Sorg et~al.(2010)Sorg, Lewis, and Singh]{sorg2010reward}
Jonathan Sorg, Richard~L Lewis, and Satinder Singh.
\newblock Reward design via online gradient ascent.
\newblock \emph{Advances in Neural Information Processing Systems}, 23, 2010.

\bibitem[Strens(2000)]{strens2000bayesian}
Malcolm~JA Strens.
\newblock A bayesian framework for reinforcement learning.
\newblock In \emph{Proceedings of the Seventeenth International Conference on
  Machine Learning}, pages 943--950, 2000.

\bibitem[Thompson(1933)]{thompson1933likelihood}
William~R Thompson.
\newblock On the likelihood that one unknown probability exceeds another in
  view of the evidence of two samples.
\newblock \emph{Biometrika}, 25\penalty0 (3-4):\penalty0 285--294, 1933.

\bibitem[Ulam(1976)]{ulam1976adventures}
Stanislaw~M Ulam.
\newblock Adventures of a mathematician.
\newblock \emph{New York: Scribner}, 1976.

\bibitem[Vinyals et~al.(2016)Vinyals, Blundell, Lillicrap, Wierstra,
  et~al.]{vinyals2016matching}
Oriol Vinyals, Charles Blundell, Timothy Lillicrap, Daan Wierstra, et~al.
\newblock Matching networks for one shot learning.
\newblock In \emph{Advances in neural information processing systems},
  volume~29, 2016.

\bibitem[Vinyals et~al.(2019)Vinyals, Babuschkin, Czarnecki, Mathieu, Dudzik,
  Chung, Choi, Powell, Ewalds, Georgiev, et~al.]{vinyals2019grandmaster}
Oriol Vinyals, Igor Babuschkin, Wojciech~M Czarnecki, Micha{\"e}l Mathieu,
  Andrew Dudzik, Junyoung Chung, David~H Choi, Richard Powell, Timo Ewalds,
  Petko Georgiev, et~al.
\newblock Grandmaster level in starcraft ii using multi-agent reinforcement
  learning.
\newblock \emph{Nature}, 575\penalty0 (7782):\penalty0 350--354, 2019.

\bibitem[Wang et~al.(2021)Wang, Jin, Huang, Zhao, Lian, Liu, and
  Chen]{wang2021preference}
Li~Wang, Binbin Jin, Zhenya Huang, Hongke Zhao, Defu Lian, Qi~Liu, and Enhong
  Chen.
\newblock Preference-adaptive meta-learning for cold-start recommendation.
\newblock In \emph{Proceedings of the International Joint Conference on
  Artificial Intelligence (IJCAI)}, pages 1607--1614, 2021.

\bibitem[Wang et~al.(2020)Wang, Cai, Yang, and Wang]{wang2020global}
Lingxiao Wang, Qi~Cai, Zhuoran Yang, and Zhaoran Wang.
\newblock On the global optimality of model-agnostic meta-learning.
\newblock In \emph{International conference on machine learning}, pages
  9837--9846. PMLR, 2020.

\bibitem[Wilson et~al.(2012)Wilson, Fern, and Tadepalli]{wilson2012bayesian}
Aaron Wilson, Alan Fern, and Prasad Tadepalli.
\newblock A bayesian approach for policy learning from trajectory preference
  queries.
\newblock \emph{Advances in neural information processing systems}, 25, 2012.

\bibitem[Wirth et~al.(2017)Wirth, Akrour, Neumann, and
  F{\"u}rnkranz]{wirth2017survey}
Christian Wirth, Riad Akrour, Gerhard Neumann, and Johannes F{\"u}rnkranz.
\newblock A survey of preference-based reinforcement learning methods.
\newblock \emph{Journal of Machine Learning Research}, 18:\penalty0 1--46,
  2017.

\bibitem[Yu et~al.(2021)Yu, Gong, He, Zhu, Liu, Ou, and An]{yu2021personalized}
Runsheng Yu, Yu~Gong, Xu~He, Yu~Zhu, Qingwen Liu, Wenwu Ou, and Bo~An.
\newblock Personalized adaptive meta learning for cold-start user preference
  prediction.
\newblock In \emph{Proceedings of the AAAI Conference on Artificial
  Intelligence}, volume~35, pages 10772--10780, 2021.

\bibitem[Zhang et~al.(2021)Zhang, Wang, Hu, Chen, Chen, Fan, and
  Zhang]{zhang2021metacure}
Jin Zhang, Jianhao Wang, Hao Hu, Tong Chen, Yingfeng Chen, Changjie Fan, and
  Chongjie Zhang.
\newblock Metacure: Meta reinforcement learning with empowerment-driven
  exploration.
\newblock In \emph{International Conference on Machine Learning}, pages
  12600--12610. PMLR, 2021.

\bibitem[Zhang et~al.(2020)Zhang, Walshe, Liu, Guan, Muller, Whritner, Zhang,
  Hayhoe, and Ballard]{zhang2020atari}
Ruohan Zhang, Calen Walshe, Zhuode Liu, Lin Guan, Karl Muller, Jake Whritner,
  Luxin Zhang, Mary Hayhoe, and Dana Ballard.
\newblock Atari-head: Atari human eye-tracking and demonstration dataset.
\newblock In \emph{Proceedings of the AAAI conference on artificial
  intelligence}, volume~34, pages 6811--6820, 2020.

\bibitem[Zintgraf et~al.(2020)Zintgraf, Shiarlis, Igl, Schulze, Gal, Hofmann,
  and Whiteson]{zintgraf2020varibad}
Luisa Zintgraf, Kyriacos Shiarlis, Maximilian Igl, Sebastian Schulze, Yarin
  Gal, Katja Hofmann, and Shimon Whiteson.
\newblock {VariBAD}: A very good method for bayes-adaptive deep rl via
  meta-learning.
\newblock In \emph{International Conference on Learning Representations}, 2020.

\end{thebibliography}
\bibliographystyle{plainnat}

\clearpage

\appendix

\section{Omitted Proofs} \label{apx:proof}

The proofs of these propositions are extended from \citet{berlekamp1968block}.

\VolumeProposition*

\begin{proof}
	Note that both oracle's preference feedback and $z$-conditioned preference prediction are binary values. \ie, for a given preference query $(\tau_{k}^{(1)}, \tau_{k}^{(2)})$, the prediction given by each task embedding $z$ is either correct or wrong. The mismatch count must be updated to one of the following cases:
	\begin{itemize}
		\item Case \texttt{\#}1: $\mismatch(z;\mathcal{Q}^{(1)\succ(2)}_{k})=\mismatch(z;\mathcal{Q}_{k-1})$ and $\mismatch(z;\mathcal{Q}^{(2)\succ(1)}_{k})=\mismatch(z;\mathcal{Q}_{k-1})+1$;
		\item Case \texttt{\#}2: $\mismatch(z;\mathcal{Q}^{(2)\succ(1)}_{k})=\mismatch(z;\mathcal{Q}_{k-1})$ and $\mismatch(z;\mathcal{Q}^{(1)\succ(2)}_{k})=\mismatch(z;\mathcal{Q}_{k-1})+1$.
	\end{itemize}

	It implies that
	\begin{align*}
		vol_z(\mathcal{Q}_{k-1}) &= \sum_{\ell=0}^{K_{E}-\mismatch(z;\mathcal{Q}_{k-1})} \binom{K-(k-1)}{\ell} \\
		&= \sum_{\ell=0}^{K_{E}-\mismatch(z;\mathcal{Q}_{k-1})} \left(\binom{K-k}{\ell} + \binom{K-k}{\ell-1}\right) \\
		&= \sum_{\ell=0}^{K_{E}-\mismatch(z;\mathcal{Q}_{k-1})} \binom{K-k}{\ell} + \sum_{\ell=0}^{K_{E}-(\mismatch(z;\mathcal{Q}_{k-1})+1)} \binom{K-k}{\ell} \\
		&= \sum_{\ell=0}^{K_{E}-\mismatch(z;\mathcal{Q}^{(1)\succ(2)}_{k})} \binom{K-k}{\ell} + \sum_{\ell=0}^{K_{E}-\mismatch(z;\mathcal{Q}^{(2)\succ(1)}_{k})} \binom{K-k}{\ell} \\
		&= vol_z(\mathcal{Q}^{(1)\succ(2)}_{k}) + vol_z(\mathcal{Q}^{(2)\succ(1)}_{k}).
	\end{align*}

	By plugging into Eq.~\eqref{eq:volume_def}, we have
	\begin{align*}
		\BVol(\mathcal{Q}_{k-1}) &= \sum_{z\in\widehat Z} vol_z(\mathcal{Q}_{k-1}) \\
		&= \sum_{z\in\widehat Z} \Bigl( vol_z(\mathcal{Q}^{(1)\succ(2)}_{k}) + vol_z(\mathcal{Q}^{(2)\succ(1)}_{k}) \Bigr) \\
		&= \BVol(\mathcal{Q}^{(1)\succ(2)}_{k}) + \BVol(\mathcal{Q}^{(2)\succ(1)}_{k}).
	\end{align*}
\end{proof}

\UnitProposition*

\begin{proof}
	Note that the values of $\BVol(\cdot)$ and $vol_z(\cdot)$ are non-negative integers. $\BVol(\mathcal{Q}_k)=1$ implies there exists exactly one task embedding $z\in\widehat Z$ satisfying $vol_z(\mathcal{Q}_k)=1=\binom{0}{0}$, which further implies $\mismatch(z;\mathcal{Q}_k)=K_{E}$.
\end{proof}

\clearpage
\section{Experiment Setting and Implementation Details}

\subsection{Experiment Setting}
\label{apx:experiment_setting}

We adopt the environment setting created by \citet{rothfuss2019promp}. This benchmark is a suite of MuJoCo locomotion tasks, where the reward function are varied to create a multi-task setting. More specifically, there are four ways to vary the task specification $\mathcal{T}$:
\begin{itemize}
	\item \texttt{Fwd-Back:} The task variable $\mathcal{T}$ varies the target direction within \{forward/backward\};
	\item \texttt{Rand-Vel:} The task variable $\mathcal{T}$ varies the target velocity within a bounded range;
	\item \texttt{Rand-Dir:} The task variable $\mathcal{T}$ varies the target direction within the 2D-plane;
	\item \texttt{Rand-Goal:} The task variable $\mathcal{T}$ varies the target location within a bounded area.
\end{itemize}
The training and testing tasks are randomly generated by a fixed random seed. \ie, the generation of training/testing tasks do not vary across runs. During meta-training, the meta-RL algorithm has the full access to the environmental interaction. The algorithms can obtain trajectories with both transition and reward information. During meta-testing, the reward function would become unavailable to the meta-RL agent. The agent can only query a preference oracle to extract information about the task specification $\mathcal{T}$. The preference oracle $\Omega_{\mathcal{T}}$ is simulated by comparing the ground-truth trajectory return given by the MuJoCo simulator, \ie, the oracle can access to the ground-truth reward function. We consider this asymmetric supervision setting since this paper only focus on the design of adaptation protocol, and our proposed adaptation algorithm can be plugged in any meta-training algorithm using latent embeddings.

In our experiments, all networks are trained using a single GPU and a single CPU core.
\begin{itemize}
	\item GPU: GeForce GTX 1080 Ti;
	\item CPU: Intel(R) Xeon(R) CPU E5-2630 v4 @ 2.20GHz.
\end{itemize}
In each run of experiment, 4M steps of training can be completed within 24 hours.

\subsection{Implementation Details of ANOLE} \label{apx:our_implementation}

The overall meta-training procedure of ANOLE is implemented upon PEARL \citep{rakelly2019efficient} with some modifications and incremental designs.

\paragraph{Probabilistic Embedding.}
Note that, different from most PEARL-based algorithms, ANOLE does not include an inference network. Instead, we use a set of trainable variables to model the latent task embedding of each training task. To expand the latent space and promote generalization, we assign each training task $\mathcal{T}_i$ a multivariate Gaussian $\mathcal{N}(\mu_i,\sigma_i)$ with zero covariance, where $\mu_i,\sigma_i^2\in\mathbb{R}^d$ and $d$ denotes the dimension of latent space. More specifically, we set up $2dN$ trainable variables instead of an inference network to model the latent space from training tasks $\{\mathcal{T}_i\}_{i=1}^N$. The same as PEARL, we conduct a regularization loss to make the learned latent space compact:
\begin{align*}
	\mathcal{L}^{\text{KL}}(\mu,\sigma) &= \frac{1}{N}\sum_{i=1}^N D_{\text{KL}}(\mathcal{N}(\mu_i,\sigma_i^2)~\|~\mathcal{N}(\mathbf{0}_d,\mathbf{1}_d)),
\end{align*}
where $\mathcal{N}(\mathbf{0}_d,\mathbf{1}_d)$ denotes a $d$-dimensional standard multivariate Gaussian.

\paragraph{Policy Training.}
We adopt the same off-policy meta-RL framework as PEARL to train the policy. We extend soft actor-critic \citep[SAC;][]{haarnoja2018soft} with the latent task embedding $z$, \ie, the policy and value functions are represented by $\pi_\theta(a\mid s;z)$ and $Q_\phi(s,a;z)$ where $\theta$ and $\phi$ denote the parameters. The objective function for actor and critic networks are presented as below:
\begin{align*}
	\mathcal{L}^{\text{actor}}(\theta) &= \mathop{\mathbb{E}}_{(\mathcal{T}_i,s)\sim\mathcal{D}}\left[ \left. D_\text{KL}\left(\pi_\theta(\cdot|s)\left\|\frac{\exp(Q_\phi(s, \cdot))}{\int\exp(Q_\phi(s, a))\mathrm{d}a}\right.\right) \right| z\sim \mathcal{N}(\mu_i,\sigma_i^2) \right], \\
	\mathcal{L}^{\text{critic}}(\phi) &= \mathop{\mathbb{E}}_{(\mathcal{T}_i,s,a,r,s')\sim\mathcal{D}}\left[ \left.\bigl(Q_\phi(s,a;z)-r-\gamma V_{\phi_{\text{target}}}(s';z)\bigr)^2 \right| z\sim \mathcal{N}(\mu_i,\sigma_i^2) \right].
\end{align*}
This part of implementation, including network architecture and optimizers, is reused from the open-source code of PEARL.

\paragraph{Preference Predictor.}
We train a $z$-conditioned preference predictor $f_\psi(\cdot;z)$ on the meta-training tasks by optimizing the preference loss function:
\begin{align*}
	\mathcal{L}^{\text{Pref}}(\psi) = \mathop{\mathbb{E}}\left[ D_{\text{KL}}\left(\left.\mathbb{I}\bigl[\tau^{(1)}\succ\tau^{(2)}\mid\mathcal{T}\bigr] ~\right\|~ f_\psi\bigl(\tau^{(1)}\succ\tau^{(2)};z\bigr)\right) \right],
\end{align*}
where $\psi$ denotes the parameters of the preference predictor, $D_{\text{KL}}(\cdot\|\cdot)$ denotes the Kullback–Leibler divergence, and $\mathbb{I}[\tau^{(1)}\succ\tau^{(2)}\mid\mathcal{T}]$ is the ground-truth preference order specified by the task specification $\mathcal{T}$ (\eg, specified by the reward function on training tasks). The trajectory pair $\langle\tau^{(1)},\tau^{(2)}\rangle$ is drawn from the experience buffer, and $z$ is the task embedding vector encoding $\mathcal{T}$. Optimizing this KL-based loss function is equivalent to optimizing binary cross entropy.

Following the implementation of preference-based RL \citep{christiano2017deep, lee2021pebble}, we use Bradley-Terry model \citep{bradley1952rank} to establish a preference predictor:
\begin{align} \label{eq:preference-predictor}
	f_{\psi}(\tau^{(1)}\succ\tau^{(2)}; z) &= \frac{\exp\left(\sum_t g_{\psi}(s^{(1)}_t, a^{(1)}_t; z)\right)}{\exp\left(\sum_t g_{\psi}(s^{(1)}_t, a^{(1)}_t; z)\right) + \exp\left(\sum_t g_{\psi}(s^{(2)}_t, a^{(2)}_t; z)\right)},
\end{align}
where $g_\psi(s,a;z)$ is a network that outputs the ranking score of state-action pair $(s,a)$. In implementation, $\tau^{(1)}$ and $\tau^{(2)}$ refer to two fixed-length trajectory segments instead of considering the complete trajectory. A future work is adopting the random-sampling trick \citep{ren2022learning} for the extension to long-horizon preference.

\paragraph{Batch-Constrained Embedding Sampling.}
A pre-processing step of our task inference algorithm is sampling a candidate pool of task embeddings (see line~\ref{alg:sample_pool} in Algorithm~\ref{alg:framework}) for the subsequent embedding selection. We restrict the support of this candidate pool to the task embedding distribution conducted in meta-training. We call this procedure batch-constrained embedding sampling, since it corresponds to the notion of batch-constrained policy \citep{fujimoto2019off} in the literature of offline reinforcement learning. This restriction ensures the induced policy $\pi_\theta(a|s;z)$ is covered by the training distribution, so that the meta-testing policies would not suffer from unpredictable out-of-distribution generalization errors. More specifically, we sample a set of task embeddings from the mixture distribution of training tasks, $\widehat Z\gets\{z_j \sim q(z)\}_{j=1}^M$ where $q(z)$ refers to the mixture of training task variable:
\begin{align*}
	q(z) = \frac{1}{N}\sum_{i=1}^N \mathcal{N}(z \mid \mu_i,\sigma_i^2).
\end{align*}

\paragraph{Candidate Pool Size.}
Note that the number of embedding candidates initialized in $\widehat Z$ may affect the computation of Berlekamp's volume. The size of candidate pool $\widehat Z$, denoted by $M$, is determined by the following formula:
\begin{align*}
	M = \left\lfloor \frac{2^K}{vol_z(\emptyset)} \right\rfloor = \left\lfloor \frac{2^K}{\sum_{\ell=0}^{K_E}\binom{K}{\ell}} \right\rfloor,
\end{align*}
where $z$ is an arbitrary embedding candidate. This configuration of candidate pool size ensures that, with ideal preference queries, Berlekamp's volume $\BVol(\mathcal{Q}_K)$  can be reduced to 1 by $K$ queries. \ie, in the ideal situation, each preference query can halve the value of $\BVol(\mathcal{Q}_k)$. Note that, our algorithm does not require $\BVol(\mathcal{Q}_K)$ to be reduced to 1, since finding the task embedding with minimum mismatch is always a plausible solution (see $\arg\min_{z\in\widehat Z} \mismatch(z;\mathcal{Q}_K)$ at line~\ref{alg:return} of Algorithm~\ref{alg:framework}).

\paragraph{Error-Tolerant Query Strategy.}
In our implementation, we use mini-batch sampling to approximate the query generation protocol $\textsc{GenerateQuery}(\cdot)$ in Eq.~\eqref{eq:query_generation}. We sample a mini-batch of trajectory pairs and find the best trajectory pair within the sampled mini-batch.
\begin{align*}
	\textsc{GenerateQuery}(\widehat Z, \mathcal{Q}_{k-1}) = \mathop{\arg\min}_{(\tau_k^{(1)}, \tau_k^{(2)})\in B} \biggl(\max\Bigl\{ \BVol(\mathcal{Q}^{(1)\succ(2)}_{k}), ~\BVol(\mathcal{Q}^{(2)\succ(1)}_{k}) \Bigr\}\biggr),
\end{align*}
where $B$ denotes a mini-batch of trajectory pairs that are uniformly sampled from the experience replay buffer. In our implementation, we sample 100 trajectory pairs for each mini-batch $B$.

\clearpage

\paragraph{Hyper-Parameters.}
We summarize major hyper-parameters as the following table. We use this set of hyper-parameters for all ANOLE's experiments.

\begin{table}[H]
	\centering
	\begin{tabular}{c c}
		\toprule
		Hyper-Parameter & Default Configuration \\
		\midrule
		dimension of latent embedding $d$ & 5 \\
		discount factor $\gamma$ & 0.99 \\
		\midrule
		optimizer (all losses) & Adam \citep{kingma2015adam} \\
		learning rate & $3\cdot 10^{-4}$ \\
		Adam-$(\beta_1, \beta_2, \epsilon)$ & $(0.9, 0.999, 10^{-8})$ \\
		\midrule
		temperature $\alpha$ & 1.0 \\
		Polyak-averaging coefficient & 0.005 \\
		\texttt{\#} gradient steps per environment step & 1/5 \\
		\texttt{\#} gradient steps per target update & 1 \\
		\midrule
		\texttt{\#} transitions in replay buffer (for each task $\mathcal{T}$) & $10^6$ \\
		\texttt{\#} tasks in each mini-batch for training SAC & 10 \\
		\texttt{\#} transitions in each task-batch for training SAC & 256 \\
		\texttt{\#} trajectory segments in each mini-batch for training $f_\psi$ & 10 \\
		\texttt{\#} transitions in each trajectory segment for training $f_\psi$ & 64 \\
		\midrule
		\texttt{\#} preference queries $K$ & 10 \\
		\texttt{\#} wrong feedbacks to tolerate $K_E$ & 2 \\
		\texttt{\#} trajectory pairs to approximate $\textsc{GenerateQuery}$ & 100 \\
		\bottomrule
	\end{tabular}
\end{table}

\subsection{Implementation Details of PEARL-based Baselines} \label{apx:pearl_implementation}

We slightly modify the implementation of PEARL to make it work for preference-based adaptation.

\paragraph{Preference-based Context Encoder.}
We modify PEARL's context encoder to handle the preference-based feedback. We use a LSTM-based context encoder that takes an ordered trajectory pair as inputs to make up the task embedding $z$, \ie, the input trajectory pair has been sorted by oracle preference. The architecture of LSTM-based encoder is implemented by the open-source code of PEARL. The same as the original version of PEARL, the output of context encoder is a $d$-dimensional Gaussian. In meta-training, the preference is labeled by the ground-truth rewards from the environment simulator, \ie, comparing the sum of ground-truth rewards. During adaptation, the PEARL-based agent draws random trajectory pairs from the experience buffer to query the preference oracle, and the oracle feedback is used to update the posterior of task variable $z$. The posterior update rule is reused from the open source code of PEARL.

\paragraph{Data Augmentation for Error-Tolerance.}
We implement a data augmentation method for PEARL's meta-training to pursue the error tolerance of preference-based inference network. To mimic the error mode of noisy preference oracle, we impose random errors to the preference comparison when training the preference-based context encoder. The error is uniformly flipped preference feedback with probability 0.2. In this way, the preference-based context encoder is trained using the same noisy preference signals as the meta-testing procedure so that the inference module is expected to have some extent of error tolerance.

\paragraph{Hyper-Parameters.}
We do not modify any hyper-parameters of PEARL. Note that the open-source implementation of PEARL specializes different hyper-parameter configurations for different environments. In this paper, we conduct two environments, \texttt{Walker2d-Rand-Vel} and \texttt{And-Rand-Dir}, without official hyper-parameter configuration since PEARL does not evaluate on them. To address this issue, we transfer hyper-parameter configurations from similar tasks. For \texttt{Walker2d-Rand-Vel}, we use the same configuration as \texttt{HalfCheetah-Rand-Vel}. For \texttt{And-Rand-Dir}, we use the same configuration as \texttt{And-Rand-Goal}.

\clearpage

\section{Ablation Studies on the Magnitude of Oracle Noise} \label{apx:noise_ablation}

We evaluate the performance of ANOLE and baselines under different magnitudes of oracle noises $\epsilon$. These results are generated by a same set of runs. \ie, each run of meta-training evaluates several meta-testing configurations. When the oracle carries no error, the performance of ANOLE and greedy binary search are almost the same. With the noise magnitude increasing, the gap between ANOLE and baselines become larger.

\subsection{Performance Evaluation without Noises ($\epsilon=0.0$)}

\begin{figure}[H]
	\includegraphics[width=0.98\linewidth]{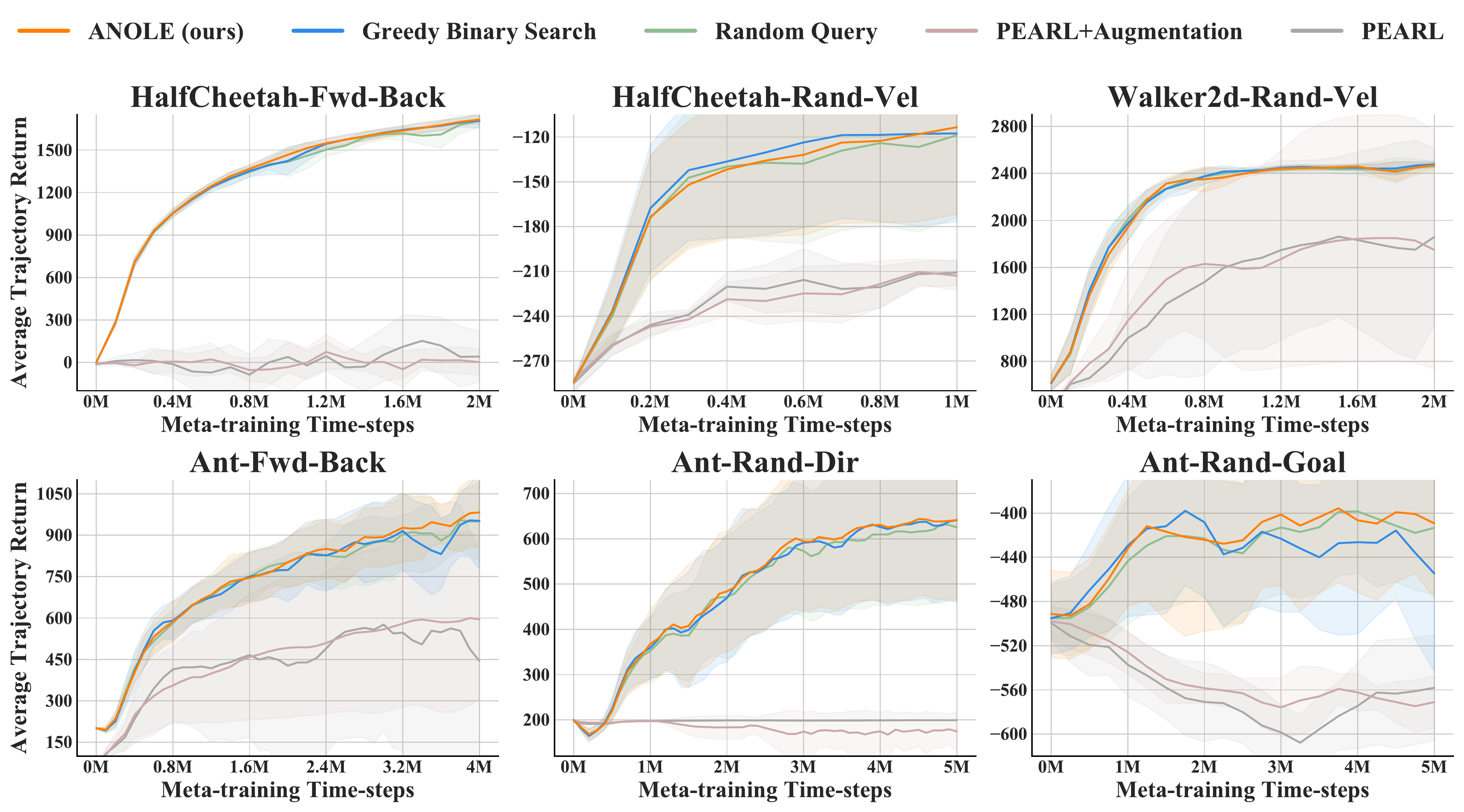}
\end{figure}

\subsection{Performance Evaluation with Noise Magnitude $\epsilon=0.1$}

\begin{figure}[H]
	\includegraphics[width=0.98\linewidth]{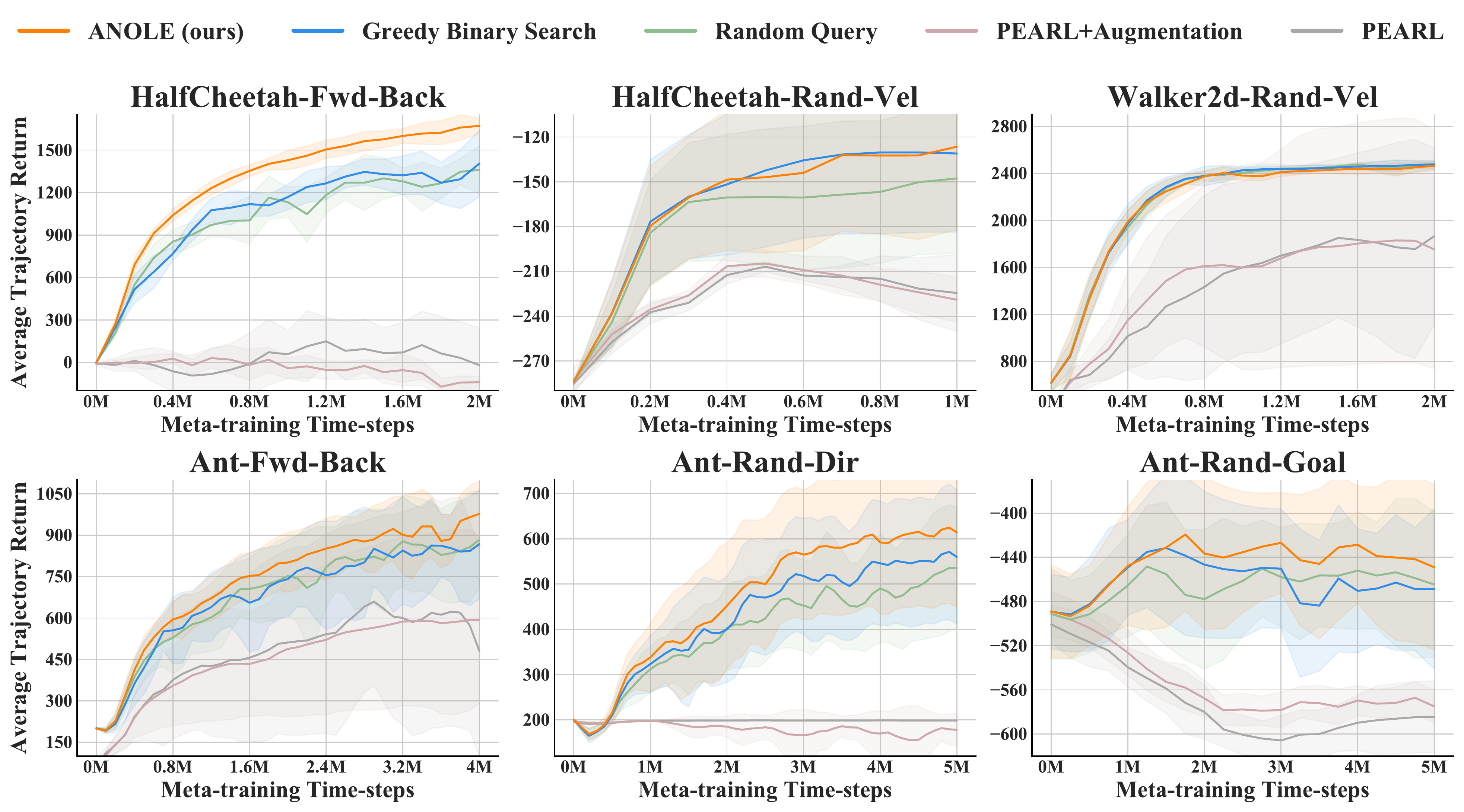}
\end{figure}

\clearpage

\subsection{Performance Evaluation with Noise Magnitude $\epsilon=0.2$ (Default Setting)}

\begin{figure}[H]
	\includegraphics[width=0.98\linewidth]{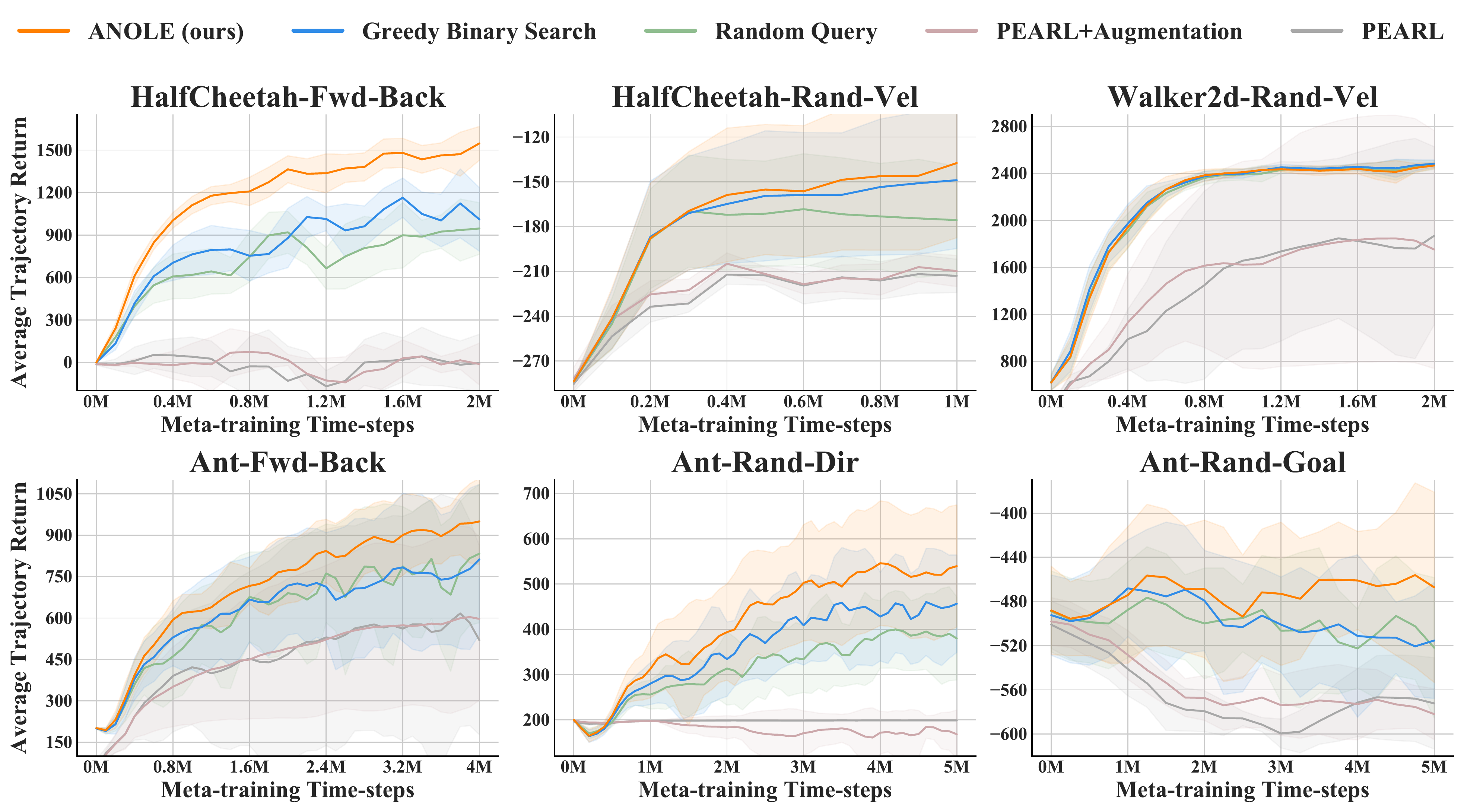}
\end{figure}

\subsection{Performance Evaluation with Noise Magnitude $\epsilon=0.3$}

\begin{figure}[H]
	\includegraphics[width=0.98\linewidth]{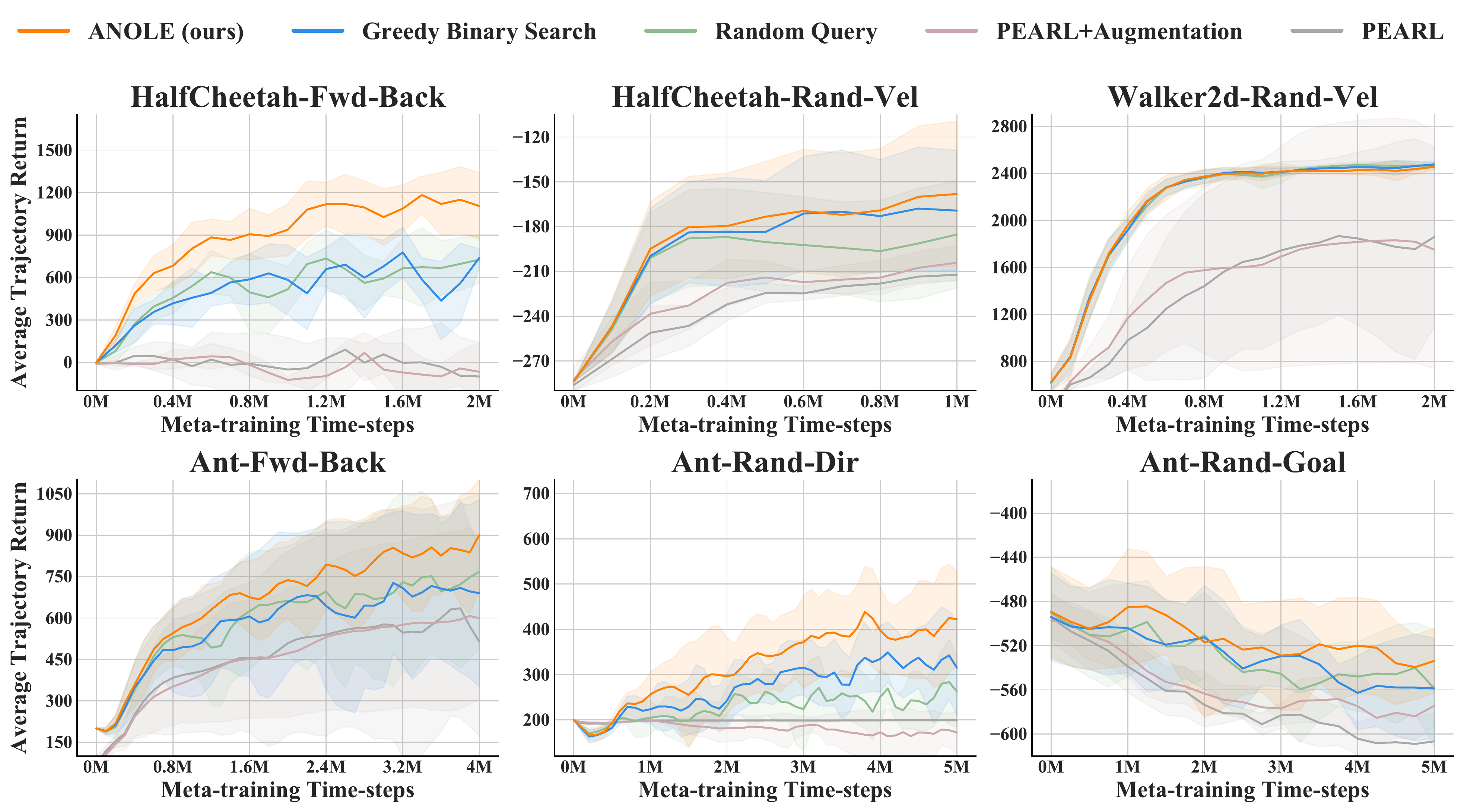}
\end{figure}

\clearpage

\section{Performance Evaluation at Each Adaptation Step}

In addition to the full learning curves, we plot the performance of final policy in every adaptation steps. Final policy refers to the last evaluation point presented in Figure~\ref{fig:main_results}. The experiments show that PEARL-based baselines cannot effectively extract task information from preference-based binary feedback.

\subsection{Preference-based Few-shot Adaptation with Noise Magnitude $\epsilon=0.2$ (Default Setting)}

\begin{figure}[H]
	\includegraphics[width=0.98\linewidth]{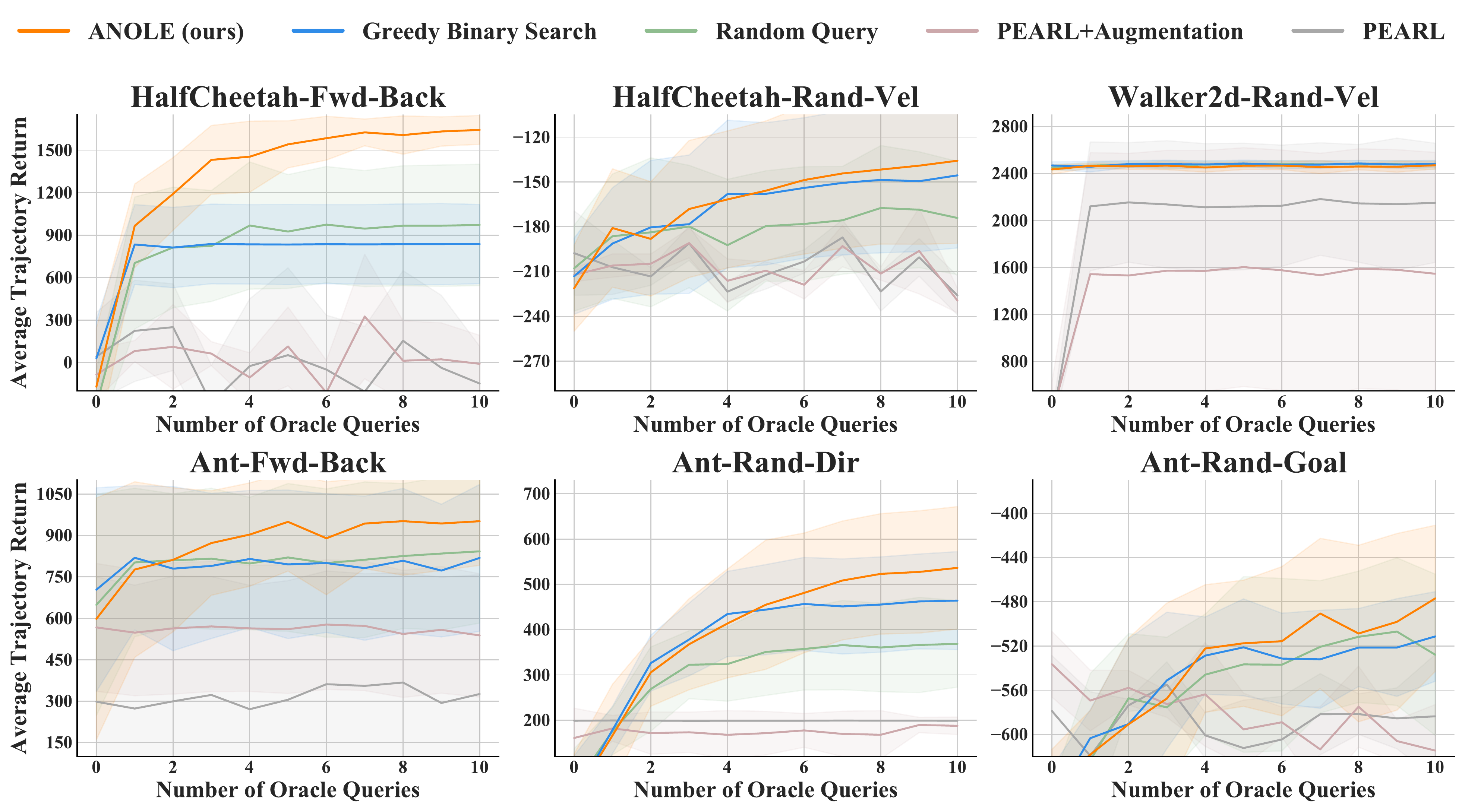}
\end{figure}

\subsection{Preference-based Few-shot Adaptation without Noises ($\epsilon=0.0$)}

\begin{figure}[H]
	\includegraphics[width=0.98\linewidth]{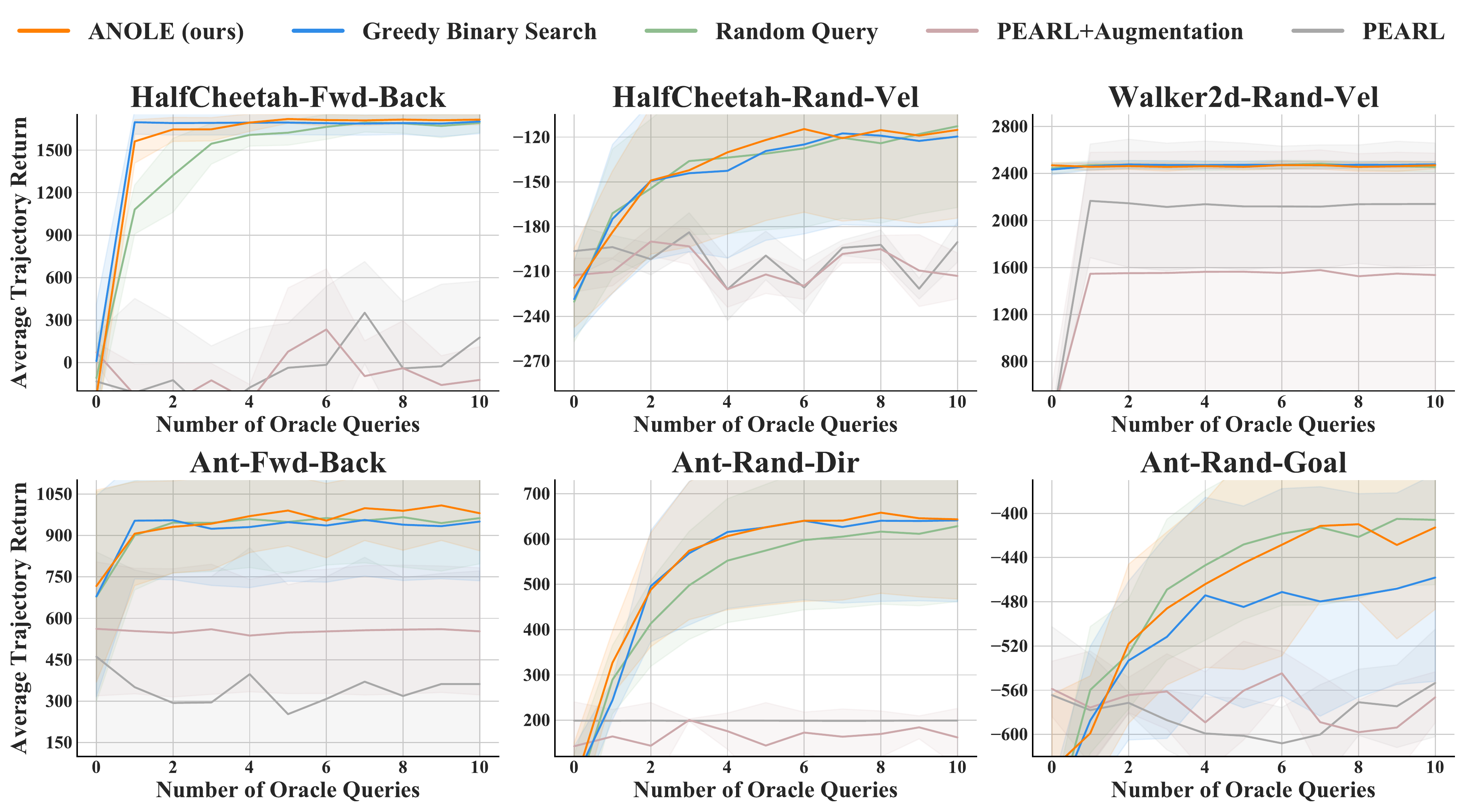}
\end{figure}

\clearpage

\section{ANOLE with An Alternative Meta-Training Module}

We investigate the performance of ANOLE with an alternative meta-training module. We use preference-based supervision to train the meta-policy. Following a classical paradigm of preference-based RL \citep{christiano2017deep}, we conduct a reward learning module to decompose the preference-based binary feedback into per-step reward supervision. More specifically, we use the ranking score $g_\psi(s,a;z)$ defined in Eq.~\eqref{eq:preference-predictor} as an auxiliary reward function, which is learned from preference comparisons. The experiments show that ANOLE with preference-based meta-training can significantly outperform PEARL-based baselines using environmental rewards.

\subsection{Performance Evaluation with Noise Magnitude $\epsilon=0.2$ (Default Setting)}

\begin{figure}[H]
	\includegraphics[width=0.98\linewidth]{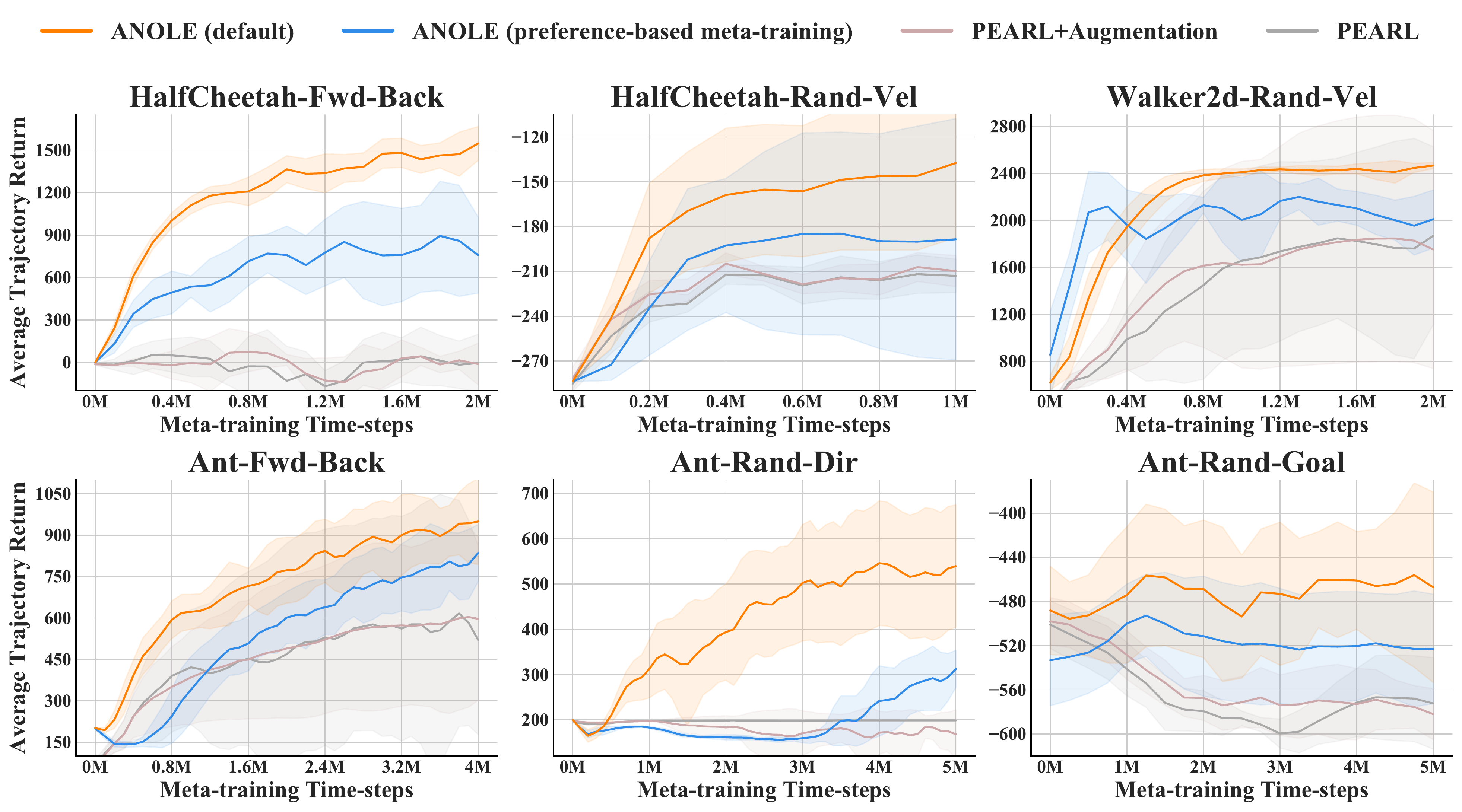}
\end{figure}

\subsection{Performance Evaluation without Noises ($\epsilon=0.0$)}

\begin{figure}[H]
	\includegraphics[width=0.98\linewidth]{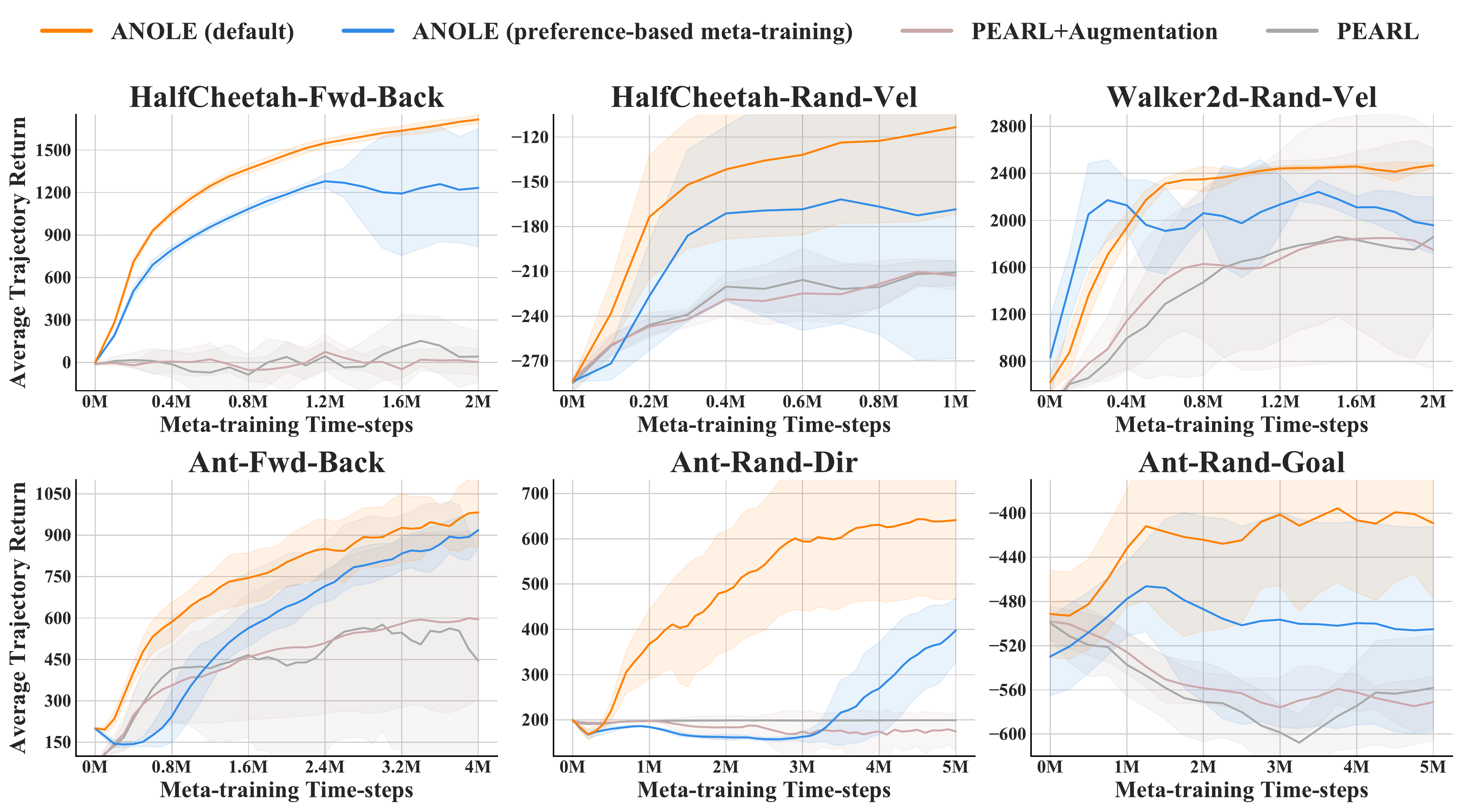}
\end{figure}

\clearpage
\section{Performance Evaluation under Two Alternative Noise Modes}

In addition to the uniform noise mode considered in the main paper, we conduct experiments with two additional noise modes:
\begin{itemize}
	\item \textbf{Boltzman Mode.} The oracle answers $\tau^{(1)}\succ\tau^{(2)}$ with the probability
	\begin{align*}
		\frac{\exp\left(\beta\cdot\text{Return}(\tau^{(1)})\right)}{\exp\left(\beta\cdot\text{Return}(\tau^{(1)})\right) + \exp\left(\beta\cdot\text{Return}(\tau^{(2)})\right)} = \frac{\exp\left(\beta\sum_t r^{(1)}_t\right)}{\exp\left(\beta\sum_t r^{(1)}_t\right) + \exp\left(\beta\sum_t r^{(2)}_t\right)}
	\end{align*}
	where $\beta$ denotes the temperature parameter. This error mode is commonly considered by recent preference-based RL works \citep{lee2021b}.
	\item \textbf{Hack Mode.} The oracle always gives wrong feedbacks for the first 20\% queries and keeps correct for the remaining 80\%. This noise mode is designed to hack search-based query strategies, since the first few queries are usually the most informative.
\end{itemize}

The experiments results are presented as follows. The learning curves tagged with label $(\beta=\cdot)$ correspond to the experiments with Boltzman noise mode. The learning curves in the last column correspond to the experiments with ``Hack'' noise mode.
\begin{figure}[H]
	\includegraphics[width=0.98\linewidth]{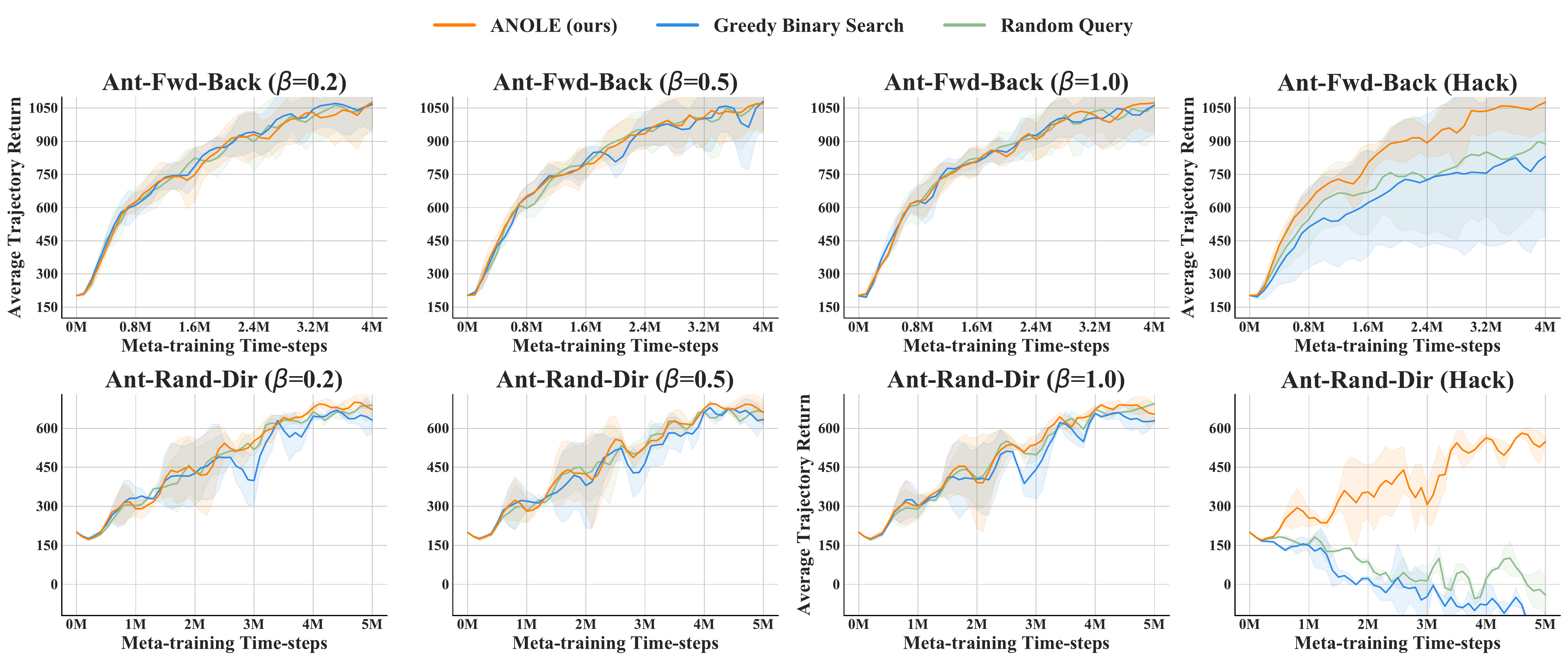}
\end{figure}

We note that, for these testing environments, Boltzman feedbacks are much more accurate than the uniform-noise oracle considered in the paper. That is because the generated query trajectory pairs can usually be clearly compared. The performance of ANOLE and greedy/random query strategies are comparable since errors hardly occur. Regarding this result, we would like to emphasize that the main purpose of ANOLE is to tolerate unintended errors of human users. More specifically, a robust algorithm is expected to tolerate a few amounts of irrational errors. 

\clearpage
\section{Interface of Human-ANOLE Interaction}
\label{apx:human-interface}

\begin{figure}[H]
	\begin{subfigure}[b]{0.32\linewidth}
		\centering
		\includegraphics[width=0.98\linewidth]{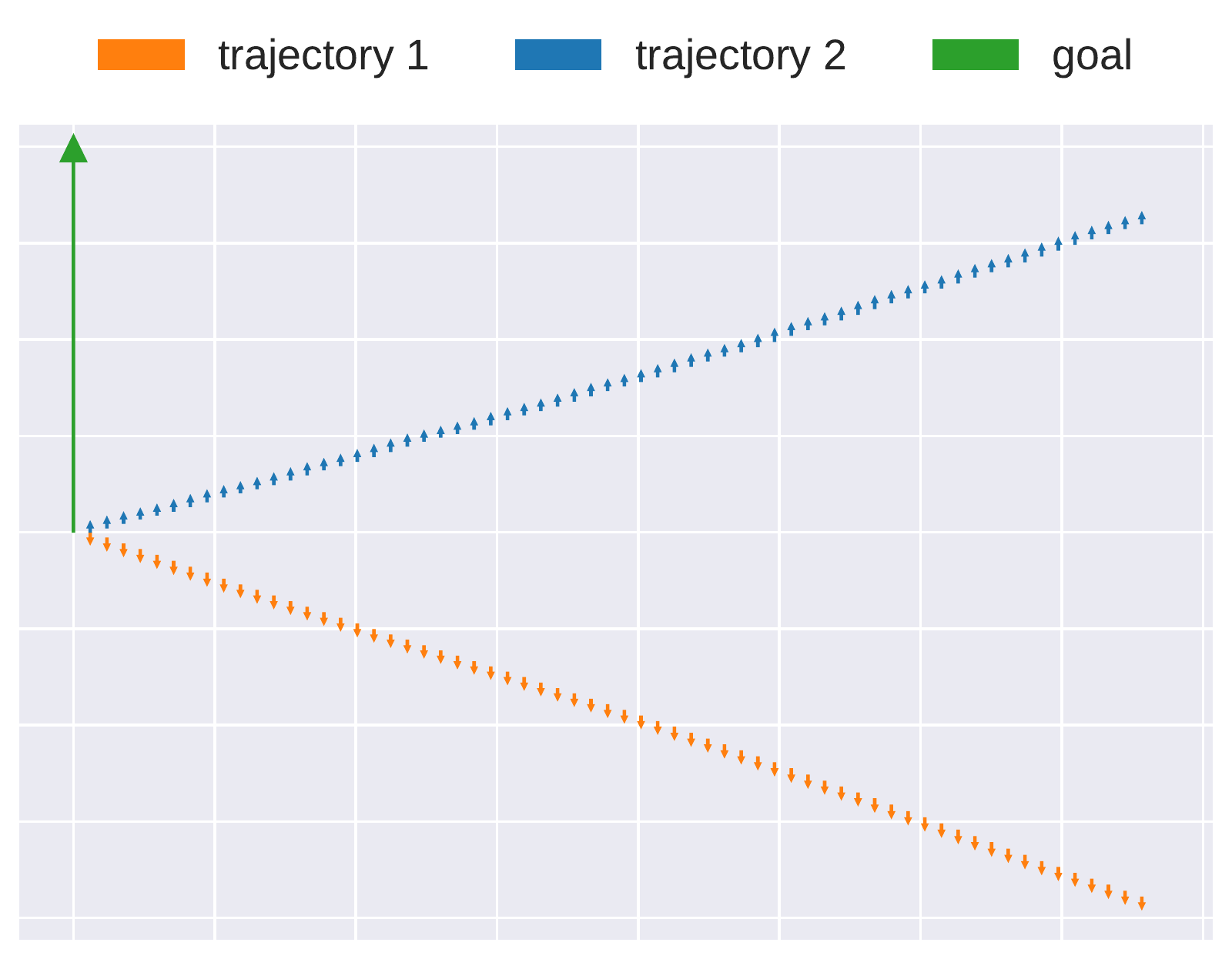}
		\caption{HalfCheetah-Fwd-Back}
	\end{subfigure}
	\begin{subfigure}[b]{0.32\linewidth}
		\centering
		\includegraphics[width=0.98\linewidth]{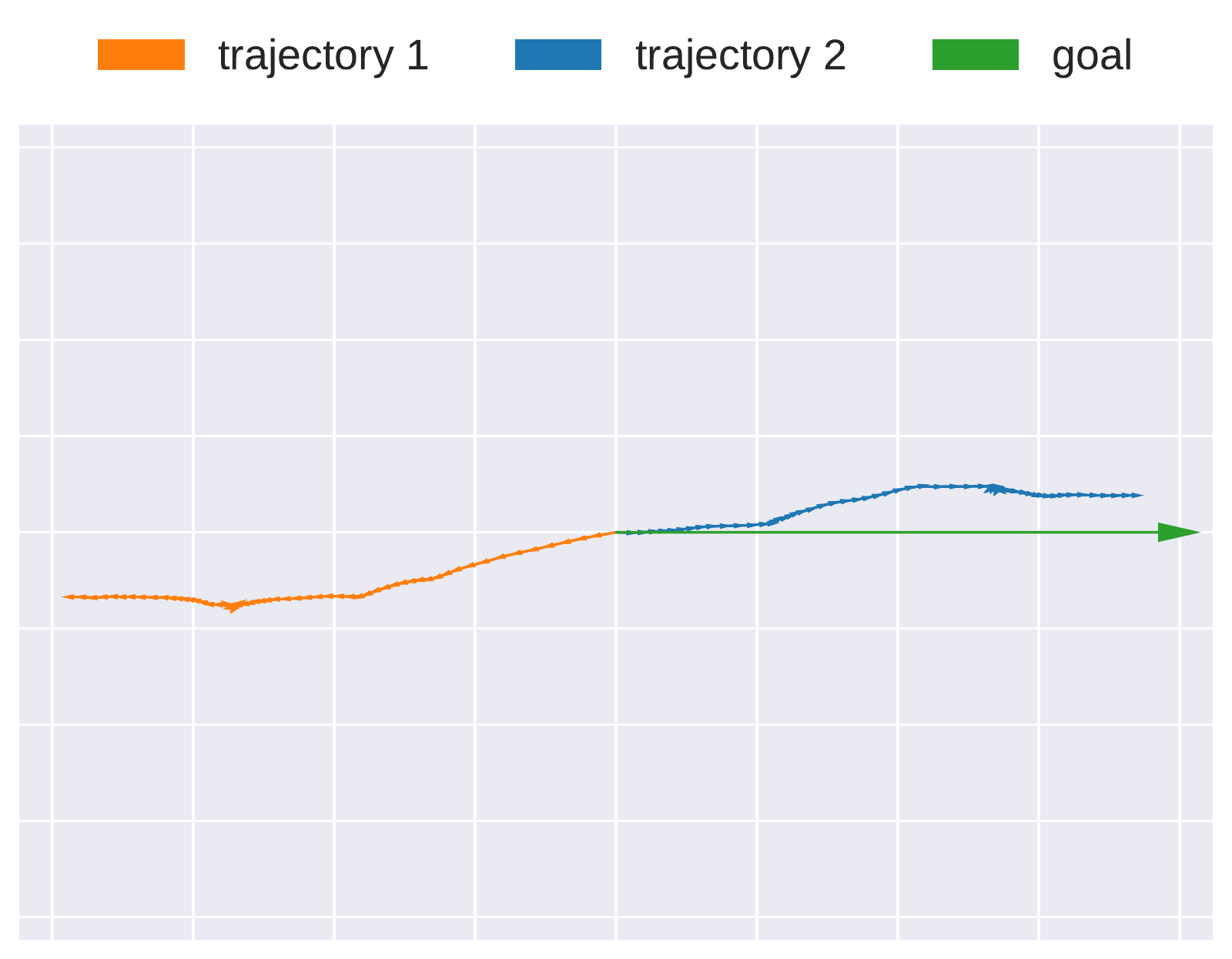}
		\caption{Ant-Fwd-Back}
	\end{subfigure}
	\begin{subfigure}[b]{0.32\linewidth}
		\centering
		\includegraphics[width=0.98\linewidth]{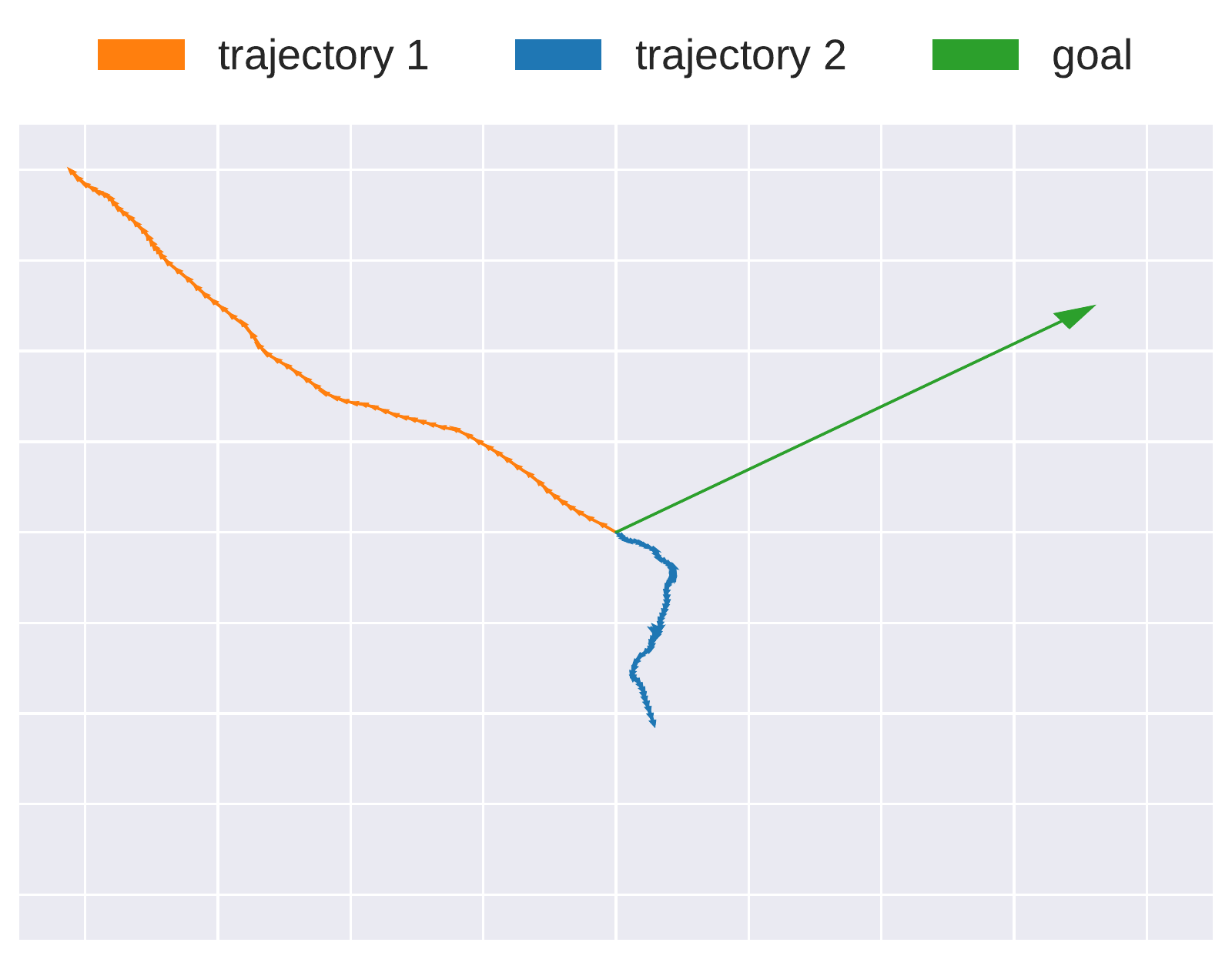}
		\caption{Ant-Rand-Dir}
	\end{subfigure}
\end{figure}

To facilitate human participation, we project the agent trajectory and the goal direction vector to a 2D coordinate system, \ie, extracting the agent location coordinate from the state representation. The human participant watches the query trajectory pair and labels the preference according to the assigned task goal vector. We implement this interface for three tasks: \texttt{HalfCheetah-Fwd-Back}, \texttt{Ant-Fwd-Back}, \texttt{Ant-Rand-Dir}. Since the \texttt{HalfCheetah} agent can only move in a single dimension, we fill in the $x$-axis of \texttt{HalfCheetah-Fwd-Back} interface by the timestep index.

\end{document}